\documentclass{article} 
\usepackage{iclr2023_conference,times}


\usepackage{amsmath,amsfonts,bm}









\def\eqref#1{equation~\ref{#1}}









\def\1{\bm{1}}










\DeclareMathAlphabet{\mathsfit}{\encodingdefault}{\sfdefault}{m}{sl}
\SetMathAlphabet{\mathsfit}{bold}{\encodingdefault}{\sfdefault}{bx}{n}











\newcommand{\R}{\mathbb{R}}



\usepackage{hyperref}
\usepackage{url}

\title{SeKron: A Decomposition Method Supporting Many Factorization Structures}

\author{Marawan Gamal Abdel Hameed,
    Ali Mosleh, Marzieh S. Tahaei,
    Vahid Partovi Nia\\
Noah’s Ark Lab, Huawei Technologies Canada \\ \\
\texttt{marawan.abdel.hameed@huawei.com}  \quad
\texttt{ali.mosleh@huawei.com}\\
\texttt{marzieh.tahaei@huawei.com}  \quad \texttt{vahid.partovinia@huawei.com}
}


%

\iclrfinalcopy 

\let\cite\citep
\setcitestyle{aysep={}}
\setlength\bibhang{0pt}

\usepackage[utf8]{inputenc} 
\usepackage[T1]{fontenc}    
\usepackage{hyperref}       
\usepackage{url}            
\usepackage{booktabs}       
\usepackage{amsfonts}       
\usepackage{nicefrac}       
\usepackage{microtype}      

\usepackage{booktabs}
\usepackage{thmtools,thm-restate}

\usepackage[mathscr]{euscript}
\usepackage{amssymb}
\usepackage{amsmath}

\renewcommand{\vec}[1]{\MakeLowercase{\mathbf{#1}}}
\newcommand{\mat}[1]{\MakeUppercase{\mathbf{#1}}}

\usepackage{tensor}
\let\tensorp\tensor
\renewcommand{\tensor}[2][]{
    \tensorp{\boldsymbol{\mathcal{\MakeUppercase{#2}}}}{#1}
}


\renewcommand{\mod}[1]{\hspace{1pt}\text{mod}\hspace{1pt}#1}
\def \R{{\rm I\!R}}
\def \N{{\rm I\!N}}

\usepackage{amsthm}

\newtheorem{theorem}{Theorem}

\usepackage{thmtools,thm-restate}

\usepackage[ruled]{algorithm2e}
\SetKwComment{Comment}{/* }{ */}
\newlength\mylen

\newcommand{\nonl}{\renewcommand{\nl}{\let\nl\oldnl}}

\usepackage{booktabs}
\usepackage{float}
\usepackage{subcaption}
\usepackage{multirow}
\usepackage{caption}
\usepackage{subcaption}
\usepackage{graphicx}
\usepackage{subcaption}


\usepackage{pgfplots}
\pgfplotsset{width=10cm,compat=1.9}

\usepackage{tikz}
\usetikzlibrary{positioning, arrows.meta, quotes}
\usetikzlibrary{shapes,snakes}
\usetikzlibrary{bayesnet}
\tikzset{>=latex}
\tikzstyle{plate caption} = [caption, node distance=0, inner sep=0pt,
below left=5pt and 0pt of #1.south]
\makeatletter
\newcommand{\ssymbol}[1]{^{\@fnsymbol{#1}}}
\makeatother
\usetikzlibrary{calc}

\usepackage[normalem]{ulem}



\begin{document}

\maketitle

\begin{abstract}
While convolutional neural networks (CNNs) have become the de facto standard for most image processing and computer vision applications, their deployment on edge devices remains challenging. Tensor decomposition methods provide a means of compressing CNNs to meet the wide range of device constraints by imposing certain factorization structures on their convolution tensors. However, being limited to the small set of factorization structures presented by state-of-the-art decomposition approaches can lead to sub-optimal performance. We propose SeKron, a novel tensor decomposition method that offers a wide variety of factorization structures, using sequences of Kronecker products. By recursively finding approximating Kronecker factors,
we arrive at optimal decompositions for each of the factorization structures. 
We show that SeKron is a flexible decomposition that generalizes widely used methods, such as Tensor-Train (TT), Tensor-Ring (TR),  Canonical Polyadic (CP) and Tucker decompositions. 
Crucially, we derive an efficient convolution projection algorithm shared by all SeKron structures, leading to seamless compression of CNN models.
We validate SeKron for model compression on both high-level and low-level computer vision tasks and find that it outperforms state-of-the-art decomposition methods. 
\end{abstract}

\vspace{-2mm}\section{Introduction}

\vspace{-1mm}
Deep learning models have introduced new state-of-the-art solutions to both high-level computer vision problems \cite{he2016deep,ren2015faster}, and low-level image processing tasks \cite{wang2018esrgan,
schuler2015learning,
kokkinos2018deep} through convolutional neural networks (CNNs). Such models are obtained at the expense of millions of training parameters that come along deep CNNs making them computationally intensive. As a result, many of these models are of limited use as they are challenging to deploy on resource-constrained edge devices.
Compared with neural networks for high-level computer vision tasks (e.g., ResNet-50 \cite{he2016deep}), models for low-level imaging problems such as single image super-resolution have much a higher computational complexity due to the larger feature map sizes. Moreover, they are typically infeasible to run on cloud computing servers. Thus, their deployment on edge devices is even more critical.

In recent years an increasing trend has begun in reducing the size of state-of-the-art CNN backbones through efficient architecture designs such as Xception \cite{chollet2017xception}, MobileNet \cite{howard2019searching}, ShuffleNet \cite{zhang2018shufflenet}, and EfficientNet \cite{tan2019efficientnet}, to name a few. On the other hand, there have been studies demonstrating significant redundancy in the parameters of large CNN models, implying that in theory the number of model parameters can be reduced while maintaining performance \cite{denil2013predicting}. 
These studies provide the basis
for the development of many model compression techniques such as pruning \cite{yang2020learning}, quantization \cite{hubara2017quantized},  knowledge distillation \cite{hinton2015distillation}, and tensor decomposition \cite{phan2020stable}.


Tensor decomposition methods such as Tucker \citep{deok2016compression}, Canonical Polyadic (CP) \citep{lebedev2014speeding}, Tensor-Train (TT) \citep{novikov2015tensorizing} and Tensor-Ring (TR) \citep{wang2018wide}  rely on finding low-rank approximations of tensors under some imposed factorization structure as illustrated in Figure~\ref{fig:tensor-networks-and-inefficiency-of-smaller-sequneces}a. 
In practice, some structures are more suitable than others when decomposing tensors. Choosing from a limited set of factorization structures can lead to sub-optimal compressions as well as lengthy runtimes depending on the hardware. This limitation can be alleviated by reshaping tensors prior to their compression to improve performance as shown in \cite{garipov2016ultimate}. However, this approach requires time-consuming development of customized convolution algorithms.


\begin{figure}[t]
\begin{minipage}{0.60\linewidth}
\vspace{-1.5mm}
\centering
\begin{minipage}{0.1\linewidth}
\captionsetup{labelformat=empty}
\caption*{(a)}
\end{minipage}
\begin{minipage}{0.89\linewidth}
\begin{subfigure}[b]{0.3\linewidth}
\resizebox{\linewidth}{!}{
\begin{tikzpicture}
    \node[circle,draw=black, fill=blue!10,inner sep=1pt, minimum size=1cm] (g) at (0,0) {\large $\tensor{g}$};
    
    \node[circle,draw=black, fill=blue!10,inner sep=1pt, minimum size=1cm] (u1) [above = of g] {\large $\tensor{a}^{(1)}$};
    \node[circle] (f) [above = 0.25cm of u1] {\large $f$};
    
    \node[circle,draw=black, fill=blue!10,inner sep=1pt, minimum size=1cm] (u2) [right = of g] {\large $\tensor{a}^{(1)}$};
    \node[circle] (c) [right = 0.25cm of u2] {\large $c$};
    
    \node[circle,draw=black, fill=blue!10,inner sep=1pt, minimum size=1cm] (u3) [below = of g] {\large $\tensor{a}^{(3)}$};
    \node[circle] (h) [below = 0.25cm of u3] {\large $h$};
    
    \node[circle,draw=black, fill=blue!10,inner sep=1pt, minimum size=1cm] (u4) [left = of g] {\large $\tensor{a}^{(4)}$};
    \node[circle] (w) [left = 0.25cm of u4] {\large $w$};

    \path [draw,-] (g) -- (u1) node [midway, left] { };
    \path [draw,-] (g) -- (u2) node [midway, above] { };
    \path [draw,-] (g) -- (u3) node [midway, right] { };
    \path [draw,-] (g) -- (u4) node [midway, below] { };
    
    \path [draw,-] (u1) edge (f);
    \path [draw,-] (u2) edge (c);
    \path [draw,-] (u3) edge (h);
    \path [draw,-] (u4) edge (w);

\end{tikzpicture}}
\captionsetup{labelformat=empty}
\vspace{-5mm}
\caption*{\vspace{-3mm}Tucker}
\end{subfigure}
\hspace{0em}%
\begin{subfigure}[b]{0.3\linewidth}
\resizebox{\linewidth}{!}{
\begin{tikzpicture}
    \node[circle,draw=black, fill=black,inner sep=1pt, minimum size=0.2cm] (g) at (0,0) {};
    
    \node[circle,draw=black, fill=blue!10,inner sep=1pt, minimum size=1cm] (u1) [above = of g] {\large $\tensor{a}^{(1)}$};
    \node[circle] (f) [above = 0.25cm of u1] {\large $f$};
    
    \node[circle,draw=black, fill=blue!10,inner sep=1pt, minimum size=1cm] (u2) [right = of g] {\large $\tensor{a}^{(1)}$};
    \node[circle] (c) [right = 0.25cm of u2] {\large $c$};
    
    \node[circle,draw=black, fill=blue!10,inner sep=1pt, minimum size=1cm] (u3) [below = of g] {\large $\tensor{a}^{(3)}$};
    \node[circle] (h) [below = 0.25cm of u3] {\large $h$};
    
    \node[circle,draw=black, fill=blue!10,inner sep=1pt, minimum size=1cm] (u4) [left = of g] {\large $\tensor{a}^{(4)}$};
    \node[circle] (w) [left = 0.25cm of u4] {\large $w$};

    \path [draw,-] (g) -- (u1) node [midway, left] { };
    \path [draw,-] (g) -- (u2) node [midway, above] { };
    \path [draw,-] (g) -- (u3) node [midway, right] { };
    \path [draw,-] (g) -- (u4) node [midway, below] { };
    
    \path [draw,-] (u1) edge (f);
    \path [draw,-] (u2) edge (c);
    \path [draw,-] (u3) edge (h);
    \path [draw,-] (u4) edge (w);

\end{tikzpicture}}
\captionsetup{labelformat=empty}
\vspace{-5mm}
\caption*{\vspace{-3mm}CP}
\end{subfigure}
\hspace{0em}%
\begin{subfigure}[b]{0.23\linewidth}
\resizebox{\linewidth}{!}{
\begin{tikzpicture}
    \node[circle,draw=black,fill=blue!10,inner sep=1pt, minimum size=0.6cm] (a1) at (0,0) {\large $\tensor{a}^{(1)}$};
    \node [circle] (f) [above= 0.25cm of a1] {\large $f$};
    \path [draw,-] (f) edge (a1);
    
    \node[circle,draw=black,fill=blue!10,inner sep=1pt, minimum size=0.6cm] (a2) [below right=of a1] {\large $\tensor{a}^{(2)}$};
    \node [circle] (c) [above= 0.25cm of a2] {\large $c$};
    \path [draw,-] (a2) edge (c);
    
    \node[circle,draw=black,fill=blue!10,inner sep=1pt, minimum size=0.6cm] (a3) [below left=of a2] {\large $\tensor{a}^{(3)}$};
    \node [circle] (h) [above= 0.25cm of a3] {\large $h$};
    \path [draw,-] (h) edge (a3);
    \node[circle,draw=black,fill=blue!10,inner sep=1pt, minimum size=0.6cm] (a4) [above left =of a3] {\large $\tensor{a}^{(4)}$};
    \node [circle] (w) [above= 0.25cm of a4] {\large $w$};
    \path [draw,-] (w) edge (a4);
    
    \path [draw,-] (a1) -- (a2) node [midway, above=8pt, right] { };
    \path [draw,-] (a2) -- (a3) node [midway, below=8pt, right] { };
    \path [draw,-] (a3) -- (a4) node [midway, below=8pt, left] { };
    \path [draw,-] (a4) -- (a1) node [midway, above=8pt, left] { };
\end{tikzpicture}}
\captionsetup{labelformat=empty}
\vspace{-5mm}
\caption*{\vspace{-3mm}TR}
\end{subfigure}

 \vspace{1mm}    
 \hspace{5mm}
\begin{subfigure}[b]{0.38\linewidth}
\resizebox{\linewidth}{!}{
\begin{tikzpicture}
    \node[circle,draw=black,fill=blue!10,inner sep=1pt, minimum size=0.6cm] (a1) at (0,0) {\large $\tensor{a}^{(1)}$};
    \node [circle] (f) [above= 0.25cm of a1] {\large $f$};
    \path [draw,-] (f) edge (a1);
    
    \node[circle,draw=black,fill=blue!10,inner sep=1pt, minimum size=0.6cm] (a2) [right=of a1] {\large $\tensor{a}^{(2)}$};
    \node [circle] (c) [above= 0.25cm of a2] {\large $c$};
    \path [draw,-] (a2) edge (c);
    
    \node[circle,draw=black,fill=blue!10,inner sep=1pt, minimum size=0.6cm] (a3) [right=of a2] {\large $\tensor{a}^{(3)}$};
    \node [circle] (h) [above= 0.25cm of a3] {\large $h$};
    \path [draw,-] (h) edge (a3);

    \node[circle,draw=black,fill=blue!10,inner sep=1pt, minimum size=0.6cm] (a4) [right=of a3] {\large $\tensor{a}^{(4)}$};
    \node [circle] (w) [above= 0.25cm of a4] {\large $w$};
    \path [draw,-] (w) edge (a4);
    
    \path [draw,-] (a1) -- (a2) node [midway, above] { };
    \path [draw,-] (a2) -- (a3) node [midway, above] { };
    \path [draw,-] (a3) -- (a4) node [midway, above] { };
\end{tikzpicture}}
        \vspace{-4mm}
\captionsetup{labelformat=empty}
\caption*{\vspace{-4mm}TT}
\end{subfigure}
\begin{subfigure}[b]{0.38\linewidth}
\resizebox{\linewidth}{!}{
    
    
    
    
    
    
    
\begin{tikzpicture}[mystyle/.style={draw,shape=circle,fill=blue!10, minimum size=0.6cm, inner sep=1pt}]
\def\ngon{5}

\node[regular polygon, regular polygon sides=\ngon, minimum size=3cm] (p) {};
\foreach\x in {1,...,\ngon}{
    \ifthenelse{\x = 2}{
        \node[mystyle] (p\x) at (p.corner \x){$\tensor{a}^{(S)}$};
        }
        { 
        \node[mystyle] (p\x) at (p.corner \x){$\tensor{a}^{(\x)}$};
        }
}
\foreach\x in {1,...,\numexpr\ngon-1\relax}{
    \ifthenelse{\NOT \x = 2}{
        \foreach\y in {\x,...,\ngon}{
            \draw (p\x) -- (p\y);
        }
    }
    {
        \foreach\y in {\x,...,\ngon}{
            \ifthenelse{\NOT \y = 3}{
                \draw (p\x) -- (p\y);
                }{}
        }
    }
}

\node (dots) at ($(p.corner 2)!0.5!(p.corner 3)$) {\Large{\rotatebox{108}{$\mathbf{\cdots}$}}};
\draw (p3) -- (dots);
\draw (p2) -- (dots);



\node[circle] (m1) [above = 0.3cm of p1] {};
\node[circle] (c1) [left= -0.2cm of m1] {$\color{red} c_1$};
\node[circle] (f1) [left= -0.2cm of c1] {$\color{red} f_1$};
\node[circle] (h1) [right= -0.2cm of m1] {$\color{red} h_1$};
\node[circle] (w1) [right= -0.2cm of h1] {$\color{red} w_1$};

\node[circle] (m2) [above right= 0.1cm and 0.4cm of p5] {};
\node[circle] (c2) [above left= -0.2cm and -0.2cm of m2] {$\color{red} c_2$};
\node[circle] (f2) [above left= -0.2cm and -0.2cm of c2] {$\color{red} f_2$};
\node[circle] (h2) [below right= -0.2cm and -0.2cm of m2] {$\color{red} h_2$};
\node[circle] (w2) [below right= -0.2cm and -0.2cm of h2] {$\color{red} w_2$};

\node[circle] (m3) [below right= 0.1cm and 0.4cm of p4] {};
\node[circle] (c3) [below left= -0.2cm and -0.2cm of m3] {$\color{red} c_3$};
\node[circle] (f3) [below left= -0.2cm and -0.2cm of c3] {$\color{red} f_3$};
\node[circle] (h3) [above right= -0.2cm and -0.2cm of m3] {$\color{red} h_3$};
\node[circle] (w3) [above right= -0.2cm and -0.2cm of h3] {$\color{red} w_3$};

\node[circle] (m4) [below left= 0.1cm and 0.4cm of p3] {};
\node[circle] (c4) [above left= -0.2cm and -0.2cm of m4] {$\color{red} c_4$};
\node[circle] (f4) [above left= -0.2cm and -0.2cm of c4] {$\color{red} f_4$};
\node[circle] (h4) [below right= -0.2cm and -0.2cm of m4] {$\color{red} h_4$};
\node[circle] (w4) [below right= -0.2cm and -0.2cm of h4] {$\color{red} w_4$};

\node[circle] (mS) [above left= 0.1cm and 0.4cm of p2] {};
\node[circle] (cS) [below left= -0.2cm and -0.2cm of mS] {$\color{red} c_S$};
\node[circle] (fS) [below left= -0.2cm and -0.2cm of cS] {$\color{red} f_S$};
\node[circle] (hS) [above right= -0.2cm and -0.2cm of mS] {$\color{red} h_S$};
\node[circle] (wS) [above right= -0.2cm and -0.2cm of hS] {$\color{red} w_S$};


\path [draw,-] (p1) edge (f1);
\path [draw,-] (p1) edge (c1);
\path [draw,-] (p1) edge (h1);
\path [draw,-] (p1) edge (w1);

\path [draw,-] (p5) edge (f2);
\path [draw,-] (p5) edge (c2);
\path [draw,-] (p5) edge (h2);
\path [draw,-] (p5) edge (w2);

\path [draw,-] (p4) edge (f3);
\path [draw,-] (p4) edge (c3);
\path [draw,-] (p4) edge (h3);
\path [draw,-] (p4) edge (w3);

\path [draw,-] (p3) edge (f4);
\path [draw,-] (p3) edge (c4);
\path [draw,-] (p3) edge (h4);
\path [draw,-] (p3) edge (w4);

\path [draw,-] (p2) edge (fS);
\path [draw,-] (p2) edge (cS);
\path [draw,-] (p2) edge (hS);
\path [draw,-] (p2) edge (wS);

\end{tikzpicture}
}
\captionsetup{labelformat=empty}
       \vspace{-7mm}
       \caption*{\vspace{-4mm}SeKron}
\end{subfigure}
\end{minipage}
\end{minipage}
\begin{minipage}{0.4\linewidth}
\begin{minipage}{0.1\linewidth}
\vspace{-0.7cm}
\captionsetup{labelformat=empty}
\caption*{(b)}
\end{minipage}
\begin{minipage}{0.88\linewidth}
\begin{subfigure}[b]{\linewidth}
\includegraphics[width=\linewidth]{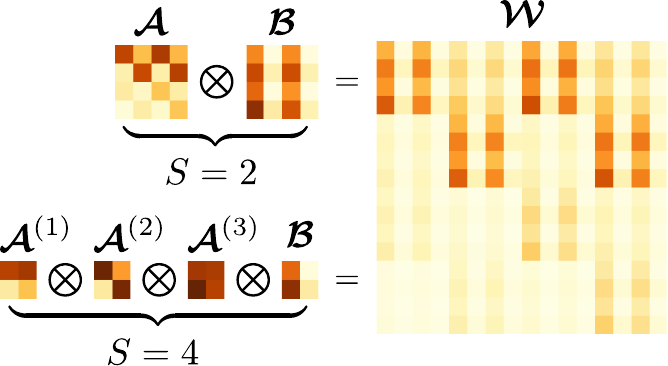}
\captionsetup{labelformat=empty}
\caption{SeKron with two sequence lengths}
\end{subfigure}
\end{minipage}
\end{minipage}
\caption{(a): Tensor network diagrams of various decomposition methods for a 4D convolution tensor $\tensor{w} \in \R^{F \times C \times K_{h} \times K_{w}}$. Unlike all other decomposition methods where $f,c,h,w$ index over \textbf{fixed} dimensions (i.e., dimensions of $\tensor{w}$), SeKron is flexible in its factor dimensions, with ${\color{red} f_k}, {\color{red}c_k}, {\color{red}h_k}, {\color{red}w_k}, \forall k \in \{1,...,S\}$ indexing over \textbf{variable} dimension choices, as well as its sequence length $S$. Thus, it allows for a wide range of factorization structures to be achieved. 
(b): Example of a 16 $\times$ 16 tensor $\tensor{w}$ that can be more efficiently represented using a sequence of four Kronecker factors (requiring \textbf{16 parameters}) in contrast to using a sequence length of two (requiring \textbf{32 parameters}). \vspace{-0.5cm}}
\label{fig:tensor-networks-and-inefficiency-of-smaller-sequneces}
\end{figure}

We propose SeKron, a novel tensor decomposition method offering a wide range of factorization structures that share the \emph{same} efficient convolution algorithm.
Our method is inspired by approaches based on the Kronecker Product Decomposition \cite{thakker2019compressing,gamal2022convolutional}. Unlike other decomposition methods, Kronecker Product Decomposition generalizes the product of smaller factors from vectors
and matrices to a range of tensor shapes, thereby exploiting local redundancy between arbitrary slices of multi-dimensional weight tensors. SeKron represents tensors using \emph{sequences} of Kronecker products to compress convolution tensors in CNNs. We show that using sequences of Kronecker products unlocks a wide range of factorization structures and generalizes other decomposition methods such as Tensor-Train (TT), Tensor-Ring (TR),  Canonical Polyadic (CP) and Tucker, under the same framework.
Sequences of Kronecker products also have the potential to exploit local redundancies using far fewer parameters as illustrated in the example in Figure~\ref{fig:tensor-networks-and-inefficiency-of-smaller-sequneces}b.
By performing the convolution operation using each of the Kronecker factors independently, the number of parameters, computational intensity, and runtime are reduced, simultaneously. 
Leveraging the flexibility SeKron, we find efficient factorization structures that outperform existing decomposition methods on various image classification and low-level image processing super-resolution tasks. In summary, our contributions are: \vspace{-2mm}
\begin{itemize}
    \item Introducing SeKron, a novel tensor decomposition method based on sequences of Kronecker products that allows for a wide range of factorization structures and generalizes other decomposition methods such as TT, TR, CP and Tucker.
    \item Providing a solution to the problem of finding the summation of sequences of Kronecker products between factor tensors that best approximates the original tensor.
    \item Deriving a single convolution algorithm shared by all factorization structures achievable by SeKron, utilized as compressed convolutional layers in CNNs.
    \item Improving the state-of-the-art of low-rank model compression on image classification (high-level vision) benchmarks such as ImageNet and CIFAR-10, as well as super-resolution (low-level vision) benchmarks such as Set4, Set14, B100 and Urban100.
\end{itemize}

\vspace{-2mm}\section{Related Work on DNN Model Compression}
\vspace{-2mm}
\textbf{Sparsification.}
Different components of DNNs, such as weights \cite{han2015learning,han2015deep}, convolutional filters \cite{he2018soft,luo2017thinet} and feature maps \cite{he2017channel,zhuang2018discrimination} can be sparse. The sparsity can be enforced using sparsity-aware regularization \cite{liu2015sparse,zhou2016less} or pruning techniques \cite{luo2017thinet,han2015learning}.
Many pruning methods \cite{luo2017thinet,zhang2018systematic} aim for a high compression ratio and accuracy regardless of the structure of the sparsity. Thus, they often suffer from imbalanced workload caused by irregular memory access.
Hence, several works aim at zeroing out structured groups of DNN components through more hardware friendly approaches \cite{wen2016learning}.

\textbf{Quantization.}
The computation and memory complexity of DNNs can be reduced by quantizing model parameters into lower bit-widths; wherein the majority of research works use fixed-bit quantization. For instance, the methods proposed in  \cite{gysel2018ristretto,louizos2018relaxed} use fixed 4 or 8-bit quantization. 
Model parameters have been quantized even further into ternary \cite{li2016ternary,zhu2016trained} and binary \cite{courbariaux2015binaryconnect,rastegari2016xnor,courbariaux2016binarized}, 
 representations.
 These methods often achieve low performance even with unquantized activations \cite{li2016ternary}. Mixed-precision approaches, however, achieve more competitive performance as shown in \cite{uhlich2019mixed} where the bit-width for each
layer is determined in an adaptive manner. 
Also, choosing a uniform \cite{jacob2018quantization} or nonuniform \cite{han2015deep,tang2017train,zhang2018lq} quantization
interval has important effects on the compression rate and the acceleration.

\textbf{Tensor Decomposition.}
Tensor decomposition approaches are based on factorizing weight tensors into  smaller tensors to reduce model sizes \cite{yin2021towards}. 
Singular value decomposition (SVD) applied on matrices as a 2-dimensional instance of tensor decomposition is used as one of the pioneering approaches to perform model compression \cite{jaderberg2014speeding}. 
Other classical high-dimensional tensor decomposition
methods, such as Tucker \cite{tucker1963implications} and CP decomposition \cite{harshman1970foundations}, are also adopted to perform model compression. 
However, using these methods often leads to
significant accuracy drops 
\cite{kim2015compression,lebedev2014speeding,phan2020stable}.
The idea of reshaping weights
of fully-connected layers into high-dimensional tensors and
representing them in TT format \cite{oseledets2011tensor}
was extended to CNNs in \cite{garipov2016ultimate}. For multidimensional tensors,
TR decomposition \cite{wang2018wide} has become a more popular option than TT \cite{wang2017efficient}. Subsequent filter basis decomposition works polished these approaches using a shared filter basis. They have been proposed for low-level computer vision tasks such as single image super-resolution in \cite{li2019learning}.
Kronecker factorization is another approach to replace the weight tensors
within fully-connected and convolution layers 
\citep{zhou2015exploiting}. The rank-1 Kronecker product representation limitation of this approach is alleviated in \cite{gamal2022convolutional}. 
The compression rate in \cite{gamal2022convolutional} is determined by both the rank and factor dimensions. For a fixed rank, the maximum compression is achieved by selecting dimensions for each factor that are closest to the square root of the original tensors' dimensions. This leads to representations with more parameters than those achieved using sequences of Kronecker products as shown in Fig. \ref{fig:tensor-networks-and-inefficiency-of-smaller-sequneces}b.

There has been extensive research on tensor decomposition through characterizing global correlation of tensors \cite{zheng2021fully}, extending CP to non-Gaussian data \cite{hong2020generalized},
employing augmented decomposition loss functions \cite{afshar2021swift}, 
etc. for different applications. Our main focus in this paper is on the ones used for NN compression. 

\textbf{Other Methods} 
NNs can also be compressed using Knowledge Distillation (KD) where a large pre-trained network known as teacher is used
to train a smaller student network \cite{mirzadeh2020improved, heo2019knowledge}.
Sharing weights in a more structured manner can be another model compression approach as FSNet \citep{yang2020fsnet} which shares filter weights across spatial locations or
ShaResNet \cite{boulch2018reducing} which reuses convolutional mappings within the same scale level.
Designing lightweight CNNs \cite{sandler2018mobilenetv2,iandola2016squeezenet,chollet2017xception,howard2019searching,zhang2018shufflenet,tan2019efficientnet} is another direction orthogonal to the aforementioned approaches. 
\vspace{-2mm}

\vspace{-0mm}\section{Method} 

In this section, we introduce SeKron and how it can be used to compress tensors in deep learning models. We start by providing background on the Kronecker Product Decomposition in Section \ref{ss:prelim}. Then, we introduce our decomposition method in \ref{ss:kronecker-sequence-decomposition}. In Section \ref{ss:convolution-with-kronecker-sequence-decomposition}, we provide an algorithm for computing the convolution operation using each of the factors directly (avoiding reconstruction) at runtime. Finally, we discuss the computational complexity of the proposed method in Section \ref{ss:computational-complexity}.

\vspace{-0mm}
{
\setlength{\abovedisplayskip}{3pt}
\setlength{\belowdisplayskip}{3pt}
\subsection{Preliminaries}\label{ss:prelim}
Convolutional layers prevalent in CNNs transform an input tensor
$\tensor{x} \in \R^{C \times K_h \times K_w}$ using a weight tensor 
$\tensor{w} \in \R^{F \times C \times K_h \times K_w}$ via a multi-linear map given by 
\begin{equation}
  \tensor{y}_{f,x,y} = \sum_{i=1}^{K_h} \sum_{j=1}^{K_w} \sum_{c=1}^{C} \tensor{W}_{f,c,i,j} \tensor{X}_{c,i+x,j+y},
  \label{eq:conv2d-scalar-form}
\end{equation}
where $C$ and $F$ denote the number of input channels and output channels, respectively, and $K_h \times K_w$ denotes the spatial size of the weight (filter). Tensor decomposition seeks a compact approximation to replace $\tensor{w}$, typically through finding lower-rank tensors using SVD. 

One possible way of obtaining such a compact approximation comes from the fact that any 
tensor $\tensor{w} \in \R^{w_1 \times \cdots \times w_N}$ can be written as sum of Kronecker products \cite{gamal2022convolutional}. Let
the Kronecker product between two matrices $\mat{a} \in \R^{a_1 \times a_2}$ and $\mat{b} \in \R^{b_1 \times b_2}$ be given by 
\begin{equation}
    \mat{a} \otimes \mat{b} \triangleq
    \begin{bmatrix}
    A_{1 1} \mat{b}  &  \dots  &  A_{1a_2} \mat{b} \\
    \vdots          &  \ddots & \vdots \\
    A_{a_1 1} \mat{b}  &  \dots  &  A_{a_1 a_2} \mat{b} 
    \end{bmatrix},  
    \label{eq:def-kron-matrix-pairs}
\end{equation}
which can be extended to two multi-way tensors $\tensor{a} \in \R^{a_1  \times \dots \times a_N}$ and $\tensor{b} \in \R^{b_1\times \dots \times b_N}$ as
\begin{equation}
    (\tensor{A} \otimes \tensor{B})_{i_1  \cdots i_N} \triangleq \tensor{A}_{j_1 \cdots j_N} \tensor{B}_{k_1 \cdots k_N},
        \label{eq:def-kron-tensor-pairs}
\end{equation}
where
$
    j_n = \left \lfloor \frac{i_n}{b_n} \right\rfloor, \hspace{7pt}
    k_n = i_n\mod{b_n}
$.
Since $\tensor{w}$ can be written as a sum of Kronecker products, i.e.
$
    \tensor{w} = \sum_{r=1}^{R} \tensor{a}_r \otimes  \tensor{b}_r,
$
it can be approximated using a lower-rank representation by solving 
\begin{equation}
    \underset{\{\tensor{a}_r\}, \{\tensor{b}_r\}}{\min} \left\| \tensor{w} - \sum_{r=1}^{\widehat{R}} \tensor{a}_r \otimes  \tensor{b}_r \right\|_{\mathrm{F}}^2,
\label{eq:opt-nearest-kronecker-product-tensor-form}
\end{equation}
for $\widehat{R}$ sums of Kronecker products ($\widehat{R} \leq R$) using the SVD of a particular reshaping (unfolding) of $\tensor{w}$, where $||\cdot||_{\mathrm{F}}$ denotes the Frobenius norm. Thus, the convolutional layer can be compressed by replacing $\tensor{w}$ with $\sum_{r=1}^{\widehat{R}} \tensor{a}_r \otimes  \tensor{b}_r$ in \eqref{eq:conv2d-scalar-form}, where $\widehat{R}$ controls the compression rate. 
}

\subsection{SeKron Tensor Decomposition}\label{ss:kronecker-sequence-decomposition}

The multi-way tensor representation in \eqref{eq:def-kron-tensor-pairs} can be extended to sequences of multi-way tensors
\begin{equation}
\left(
\tensor{a}^{(1)} \otimes\cdots\otimes
\tensor{a}^{(S)} 
\right)_{i_1 \cdots i_N}
\triangleq
\tensor{a}_{j_1^{(1)} \cdots j_N^{(1)}}^{(1)} \cdots
\tensor{a}_{j_1^{(S)} \cdots j_N^{(S)}}^{(S)},
\label{eq:def-kron-tensor-sequences}
\end{equation}
where 
\begin{equation}
j_n^{(k)} =
    \begin{cases}
        i_n - \sum_{t=1}^{k-2} j_{n}^{(t)} \prod_{l=t+1}^{S} a_n^{(l)} \text{mod} \, a_n^{(S)} \quad & k=S, \\
    \left\lfloor \frac
    { i_n - \sum_{t=1}^{k-1} j_{n}^{(t)} \prod_{l=t+1}^{S} a_n^{(l)} }
    {\prod_{l=k+1}^{S} a_n^{(l)}}\right\rfloor \quad 
    & \text{otherwise},
    \end{cases}
    \label{eq:kron-tensor-sequences-indexing}
\end{equation}
and $\tensor{a}^{(k)} \in \R^{a_1^{(k)} \times \dots \times a_N^{(k)}}$.
Therefore, our decomposition method using sequences of Kronecker products i.e., SeKron, applied on a given tensor $\tensor{w} \in \R^{w_1 \times \cdots \times w_N}$ involves finding a solution to
\begin{equation}
    \underset{\{\tensor{a}^{(k)}\}_{k=1}^{S}}{\min}
    \left\| \tensor{w}  - 
    \sum_{r_{1}=1}^{R_{1}} \tensor{a}_{r_{1}}^{(1)} \otimes
    \sum_{r_{2}=1}^{R_{2}} \tensor{a}_{r_{1} r_{2}}^{(2)} \otimes\cdots\otimes
    \sum_{r_{S-1}=1}^{R_{S-1}} \tensor{a}_{r_{1} \cdots r_{S-1} }^{(S-1)}
    \otimes
    \tensor{a}_{r_{1} \cdots r_{S-1} }^{(S)}  \right\|_{\mathrm{F}}^{2}.
    \label{eq:opt-nearest-kronecker-sequence-tensor-form}
\end{equation}
Although this is a non-convex optimization problem, we provide a solution based on recursive application of SVD and demonstrate its quasi-optimality:

\begin{restatable}[Tensor Decomposition using a Sequence of Kronecker Products]{theorem}{ksdecomposition}
\label{thm:ksdecomposition}
Any tensor $\tensor{w}$ $\in$ $\R^{w_1  \times \cdots \times w_N}$ can be represented by a sequence of Kronecker products between $S\in \N$ factors:
\begin{equation}
    \tensor{w} =  \sum_{r_{1}=1}^{R_{1}} \tensor{a}_{r_{1}}^{(1)} \otimes
    \sum_{r_{2}=1}^{R_{2}} \tensor{a}_{r_{1} r_{2}}^{(2)} \otimes\cdots\otimes
    \sum_{r_{S-1}=1}^{R_{S-1}} \tensor{a}_{r_{1} \cdots r_{S-1} }^{(S-1)}
    \otimes
    \tensor{a}_{r_{1} \cdots r_{S-1} }^{(S)},
    \label{eq:ksd-full-rank-tensor-form}
\end{equation}
where $R_i$ are ranks of intermediate matrices and $\tensor{a}^{(k)} \in \R^{a_1^{(k)} \times \cdots \times a_N^{(k)}}$.
\end{restatable}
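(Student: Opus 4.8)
The plan is to prove \eqref{eq:ksd-full-rank-tensor-form} by induction on the sequence length $S$, using the exact sum-of-Kronecker-products representation recalled in Section~\ref{ss:prelim} as the engine that strips off one factor at a time. I first fix, for every mode $n$, a factorization of the dimension $w_n = a_n^{(1)} a_n^{(2)} \cdots a_n^{(S)}$; these products determine the admissible shapes of the factor tensors $\tensor{a}^{(k)} \in \R^{a_1^{(k)} \times \cdots \times a_N^{(k)}}$.

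For the base case $S=1$ the claim is vacuous ($\tensor{w} = \tensor{a}^{(1)}$). The engine for the inductive step is the two-factor fact: any tensor can be written exactly as $\tensor{w} = \sum_{r=1}^{R_1} \tensor{a}_r^{(1)} \otimes \tensor{b}_r$. Its proof is the rearrangement argument behind \eqref{eq:opt-nearest-kronecker-product-tensor-form}: there is a linear reshaping $\mathcal{R}$ sending each Kronecker product $\tensor{a}\otimes\tensor{b}$ to the outer product $\mathrm{vec}(\tensor{a})\,\mathrm{vec}(\tensor{b})^\top$, so a sum of $R$ Kronecker products becomes a rank-$R$ matrix. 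Taking the \emph{full} (untruncated) SVD of $\mathcal{R}(\tensor{w})$ and keeping all $R_1 = \mathrm{rank}(\mathcal{R}(\tensor{w}))$ singular triplets gives an exact equality, with the reshaped left/right singular vectors (absorbing the singular values) becoming the factors $\tensor{a}_r^{(1)}$ and $\tensor{b}_r$. This identifies $R_1$ as the rank of the first intermediate matrix.

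For the inductive step I assume the result for sequences of length $S-1$ and apply the engine once to peel off the leading factor, writing $\tensor{w} = \sum_{r_1=1}^{R_1} \tensor{a}_{r_1}^{(1)} \otimes \tensor{c}_{r_1}$, where each remainder $\tensor{c}_{r_1}$ has the residual mode sizes $\prod_{k\ge 2} a_n^{(k)}$. Applying the induction hypothesis to each $\tensor{c}_{r_1}$ expands it into a length-$(S-1)$ sequence with rank sums $r_2,\dots,r_{S-1}$ and factors $\tensor{a}_{r_1 r_2}^{(2)},\dots,\tensor{a}_{r_1\cdots r_{S-1}}^{(S)}$; substituting back and invoking associativity of $\otimes$ yields precisely the nested form \eqref{eq:ksd-full-rank-tensor-form}, with each $R_k$ the rank of the corresponding intermediate unfolding at recursion depth $k$.

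The step I expect to be the main obstacle is verifying that the composed index map reproduces \eqref{eq:kron-tensor-sequences-indexing}. Each use of the engine contributes the elementary splitting $j_n = \lfloor i_n / b_n\rfloor$, $k_n = i_n \bmod b_n$ of \eqref{eq:def-kron-tensor-pairs}; iterating it $S-1$ times is exactly digit extraction for $i_n$ in the mixed-radix system with radices $a_n^{(1)},\dots,a_n^{(S)}$. I would confirm this by a secondary induction showing that, after the first $k-1$ factors have been removed, the surviving residual index equals $i_n - \sum_{t=1}^{k-1} j_n^{(t)} \prod_{l=t+1}^{S} a_n^{(l)}$; floor-dividing this residual by $\prod_{l=k+1}^{S} a_n^{(l)}$ then yields $j_n^{(k)}$ for $k<S$, while the final residual (equivalently, the $\bmod\, a_n^{(S)}$ expression) yields $j_n^{(S)}$, matching both branches of \eqref{eq:kron-tensor-sequences-indexing}. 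The conceptual content lives in the recursive SVD construction; this last part is notation-heavy but routine mixed-radix bookkeeping.
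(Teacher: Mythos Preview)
Your proposal is correct and rests on the same core mechanism as the paper: recursively apply the two-factor Kronecker/SVD rearrangement (\eqref{eq:opt-nearest-kronecker-product-tensor-form}) to peel off one factor at a time, taking full rank at each step so that every split is exact. The paper packages this differently: rather than a clean induction on $S$ aimed purely at existence, it sets up the intermediate tensors $\tensor{b}^{(k)}$, bounds the reconstruction error for \emph{truncated} ranks $\widehat{R}_k$ by unrolling a recursive inequality into the singular-value formula \eqref{eq:appendix-reconstruction-error-unrolled}, and then observes that choosing $\widehat{R}_i=R_i$ collapses every term to zero. So the paper's argument yields the quantitative approximation bound as a byproduct, whereas your route is leaner for the theorem as stated. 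Your attention to the mixed-radix index bookkeeping of \eqref{eq:kron-tensor-sequences-indexing} is also something the paper's proof does not spell out; it works entirely at the level of the reshaping operators $\textsc{Unfold}/\textsc{Mat}/\textsc{Vec}$ and Frobenius norms. One small point you gloss over (as does the paper): when you apply the induction hypothesis to each remainder $\tensor{c}_{r_1}$, the resulting inner ranks can in principle depend on $r_1$; to match the single symbols $R_2,\dots,R_{S-1}$ in the statement you should take the maximum over $r_1$ and pad with zero factors.
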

\begin{proof}
See Appendix
\end{proof}

{\setlength{\abovedisplayskip}{1pt}
\setlength{\belowdisplayskip}{1pt}
Our approach to solving \eqref{eq:opt-nearest-kronecker-sequence-tensor-form} involves finding two approximating Kronecker factors that minimize the reconstruction error with respect to the original tensor, then recursively applying this procedure on the latter factor found. More precisely, we define intermediate tensors
\begin{multline}
    \tensor{b}_{r_1 \cdots r_k}^{(k)} \triangleq 
    \sum_{r_{k+1}=1}^{R_{k+1}} \tensor{a}_{r_1 \cdots  r_{k+1}}^{(k+1)} \otimes
    \sum_{r_{k+2}=1}^{R_{k+2}} \tensor{a}_{r_1 \cdots r_{k+2}}^{(k+2)} \otimes
    \cdots
    \otimes
    \sum_{r_{S-1}=1}^{R_{S-1}} \tensor{a}_{r_1 \cdots r_{S-1}}^{(S-1)}
    \otimes
    \tensor{a}_{r_1 \cdots r_{S-1}}^{(S)},
    \label{eq:intermediate-tensors-full-rank}
\end{multline}
allowing us to re-write the reconstruction error in \eqref{eq:opt-nearest-kronecker-sequence-tensor-form}, for the $k^{th}$ iteration, as
\begin{equation}
    \underset{\substack{\{\tensor{a}_{r_{1} \cdots r_k}^{(k)}, \; \tensor{b}_{r_{1} \cdots r_k}^{(k)} \} \\ r_j = 1, \cdots R_j, \; j=1, \dots, k}}{\min}
    \left\| \tensor{w}_{r_1 \cdots r_{k-1}}^{(k)}  - 
    \sum_{r_{k}=1}^{R_{k}} \tensor{a}_{r_{1} \cdots r_k}^{(k)} \otimes
    \tensor{b}^{(k)}_{r_{1} \cdots r_k}
    \right\|_{\mathrm{F}}^{2}.
    \label{eq:opt-ksd-intermediate-tensors}
\end{equation}
In the first iteration, the tensor being decomposed is the original tensor (i.e.,  $\tensor{w}^{(1)} \gets \tensor{w}$). Whereas in  subsequent iterations, intermediate tensors are decomposed.
At each iteration, we can convert the problem in \eqref{eq:opt-ksd-intermediate-tensors} to the low-rank matrix approximation problem
\begin{equation}
    \underset{\substack{
    \{\vec{a}_{r_{1} \cdots r_k}^{(k)}, \; \vec{b}_{r_{1} \cdots r_k}^{(k)} \} \\ r_j = 1, \cdots R_j, \; j=1, \dots, k}}{\min}
    \left\| \mat{w}_{r_1 \cdots r_{k-1}}^{(k)}  - 
    \sum_{r_{k}=1}^{R_{k}} \vec{a}_{r_1 \cdots r_k}^{(k)} 
    \vec{b}^{(k) \top}_{r_1 \cdots r_k}
    \right\|_{\mathrm{F}}^{2},
    \label{eq:opt-ksd-intermediate-vectors}
\end{equation}
through reshaping, such that the overall sum of squares is preserved between \eqref{eq:opt-ksd-intermediate-tensors} and \eqref{eq:opt-ksd-intermediate-vectors}.
The problem in \eqref{eq:opt-ksd-intermediate-vectors} can be readily solved, as it has a well known solution using SVD. The reshaping operations that facilitate this transformation are
\begin{equation}
\mat{w}_{r_1 \cdots r_{k-1}}^{(k)} = \textsc{Mat}(\textsc{Unfold}(\tensor{w}_{r_1 \cdots r_{k-1}}^{(k)}, \vec{d}_{\tensor{b}_{r_1 \cdots r_k}^{(k)}})),
\label{eq:reshaping-operations-w}
\end{equation}
\vspace{-1mm}
\begin{equation}
\vec{a}_{r_1  \cdots r_k}^{(k)} =  \textsc{Unfold}(\tensor{a}_{r_1 \cdots r_{k}}^{(k)}, \vec{d}_{\tensor{i}_{\tensor{a}_{r_1 \cdots r_k}^{(k)}}}), \; \;
\vec{b}_{r_1 \cdots r_{k}}^{(k)} = \textsc{Vec}(\tensor{b}_{r_1 \cdots r_k}^{(k)}),
\label{eq:reshaping-operations-a-b}
\vspace{-2mm}
\end{equation}
where $\textsc{Unfold}$ reshapes tensor $\tensor{w}_{r_1 \cdots r_{k-1}}^{(k)}$ by extracting multidimensional patches of shape $\vec{d}_{\tensor{b}_{r_1, \dots r_k}^{(k)}}$ from tensor $\tensor{w}_{r_1 \cdots r_{k-1}}^{(k)}$ in any order,  $\vec{d}_{\tensor{b}}$ denotes a vector describing the shape of a tensor $\tensor{b}$, $\textsc{Vec}: \R^{d_{1} \times  \cdots \times d_{N}} \to \R^{d_{1} \cdots d_{N}}$ flattens a tensor to a 1D vector, $\textsc{Mat}: \R^{d_{1} \times \cdots \times d_{N}} \to \R^{d_{1} \cdots  d_{N}}$ reshapes a tensor to a matrix and  $\tensor{i}_{\tensor{a}}$ denotes an identity tensor which has the same number of dimensions as tensor $\tensor{a}$ with each dimension set to one.}

{\setlength{\textfloatsep}{0pt}
\setlength{\floatsep}{0pt}
\setlength{\intextsep}{0pt}
\setlength{\dbltextfloatsep}{0pt}
\begin{algorithm}[t]
    \KwIn{$ \text{Input tensor} \, \tensor{w} \in \R^{w_1 \times  \cdots \times w_N} \; \; \text{Kronecker factor shapes} \ \{d^{(i)}\}_{i=1}^{S}$}
    \KwOut{$\text{Kronecker factors} \; \{\tensor{a}\}_{i=1}^{S}$}

    \For{$i \gets 1, 2, \dots, S-1$}{
        $\vec{d}^{(a)} \gets \vec{d}^{(i)}$ \\
        $\vec{d}^{(b)} \gets \prod_{k=i+1}^{S} \vec{d}^{(k)}$ \\
        $\mat{w} \gets \textsc{Unfold}(\tensor{w}, \, \text{shape}=\vec{d}^{(b)})$ \tcp{$\R^{B \times L \times \prod_{k=1}^{N}d_{k}^{(b)}}$} 
        $\mat{u}, \vec{s}, \mat{v} \gets \textsc{SVD}(\mat{W})$ \tcp{$\mat{U} \in \R^{B \times L \times R}$ where $R=\text{min}(L, \prod_{k=1}^{N}d_{k}^{(b)})$}
        $\tensor{a}^{(i)} \gets \textsc{Stack}((\textsc{Reshape}(\mat{u}_{b, :, r}, \, \text{shape}=\vec{d^{(a)}}) \,|\, b=1, 2, \dots B, \; r=1, 2, \dots R))$ \\
        $\tensor{b}^{(i)} \gets \textsc{Stack}((\textsc{Reshape}(s_k \mat{v}^{\top}_{b, :,r}, \, \text{shape}=\vec{d^{(b)}}) \,|\, b=1, 2, \dots B, \; r=1, 2, \dots R))$\\
        $\tensor{w} \gets \tensor{b}^{(i)}$\\
    }
     $\tensor{a}^{(S)} \gets \tensor{B}^{S-1}$\\
    \Return $\{\tensor{a}\}_{i=1}^{S}$
    \caption{SeKron Tensor Decomposition}
    \label{alg:kronecker-sequence-decomposition}
\end{algorithm}}

Once each $\tensor{b}_{r_1 \cdots r_{k}}^{(k)}$ is obtained by solving \eqref{eq:opt-ksd-intermediate-vectors} (and using the inverse of the $\textsc{Vec}$ operation in \eqref{eq:reshaping-operations-a-b}), we proceed recursively by setting $\tensor{w}_{r_1 \cdots r_{k}}^{(k+1)} \gets \tensor{b}_{r_1 \cdots r_{k}}^{(k)}$ and solving the $k+1^{th}$ iteration of equation \eqref{eq:opt-ksd-intermediate-vectors}. In other words, at the $k^{th}$ iteration, we find Kronecker factors $\tensor{a}^{(k)}$ and  $\tensor{b}^{(k)}$, where the latter is used in the following iteration. Except in the final iteration (i.e., $k=S-1$), where the intermediate tensor $\tensor{b}^{(k)}$ is the solution to the last Kronecker factor $\tensor{a}^{(S)}$. Algorithm \ref{alg:kronecker-sequence-decomposition} presents this procedure, where for clarity, the recursive steps are unfolded in a for-loop.

By virtue of the connectivity between all of the Kronecker factors as illustrated in Figure~\ref{fig:tensor-networks-and-inefficiency-of-smaller-sequneces}a, SeKron generalizes many other decomposition methods. This result is formalized in the following theorem:

\begin{restatable}[Generality of SeKron]{theorem}{sekrongenerality}
\label{thm:SeKron-generality}
The CP, Tucker, TT and TR decompositions of a given tensor $\tensor{w} \in \R^{w_1 \times \cdots \times w_N}$ are special cases of its SeKron decomposition.
\end{restatable}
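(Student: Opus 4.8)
The plan is to prove the theorem constructively: for each of CP, Tucker, TT and TR I will exhibit an explicit choice of the sequence length $S$, the factor shapes $\tensor{a}^{(k)} \in \R^{a_1^{(k)} \times \cdots \times a_N^{(k)}}$, and the ranks $R_1, \dots, R_{S-1}$, together with an identification of the SeKron factor entries with the factors of the target format, so that the right-hand side of \eqref{eq:ksd-full-rank-tensor-form} reproduces that format entrywise. Since the Kronecker product is bilinear, the nested sums in \eqref{eq:ksd-full-rank-tensor-form} distribute into a single summation $\sum_{r_1, \dots, r_{S-1}}$ of sequence Kronecker products, so it suffices to match scalar (entrywise) expressions.

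\textbf{Key reduction.} First I would establish a single-mode reduction lemma directly from the mixed-radix formula \eqref{eq:kron-tensor-sequences-indexing}. If each factor is \emph{concentrated on at most one mode} (all of its modes have size $1$ except possibly one), then for a mode $n$ in which exactly one factor $k$ has $a_n^{(k)}>1$, the radix expansion of $i_n$ forces $j_n^{(k)} = i_n$ for that factor and $j_n^{(k')} = 0$ for every other factor $k'$ (the only admissible index of a singleton mode). Consequently a sequence Kronecker product of such factors collapses to a product of selected entries, so that \eqref{eq:ksd-full-rank-tensor-form} becomes $\tensor{w}_{i_1 \cdots i_N} = \sum_{r_1, \dots, r_{S-1}} \prod_{k=1}^{S} [\tensor{a}^{(k)}_{r_1 \cdots r_{\min(k,S-1)}}]_{i_{m(k)}}$, where $m(k)$ is the active mode of factor $k$ and a factor that is singleton on \emph{every} mode simply contributes a scalar coefficient. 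This common scalar form is what I would then match against each target, after checking in each case that $\prod_k a_n^{(k)} = w_n$.

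\textbf{Matching the four formats.} Taking $S=N$ with factor $k$ active on mode $k$: setting $R_1 = R$ and $R_2 = \cdots = R_{N-1}=1$ collapses the sum to $\sum_{r}\prod_k [\tensor{a}^{(k)}_{r}]_{i_k}$, which is exactly CP; while letting each factor depend only on the adjacent pair $(r_{k-1}, r_k)$ and identifying it with the TT core $\tensor{g}^{(k)}_{r_{k-1}, i_k, r_k}$ (with the endpoint ranks $r_0=r_N=1$ absorbed at the boundary factors) reproduces TT. For TR and Tucker I would instead take $S=N+1$, so that SeKron carries the $N$ summation indices these formats require, and crucially exploit that the \emph{terminal} factor $\tensor{a}^{(S)}$ is permitted to depend on the entire rank history $r_1, \dots, r_{S-1}$. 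For TR, a leading factor that is singleton on every mode hosts the ring index $r_1$, and the terminal factor is made to depend on both $r_{S-1}$ and $r_1$, closing the ring (realizing the trace). For Tucker, factors $1, \dots, N$ carry the factor matrices $\mat{A}^{(n)}$ on their active modes, and the terminal factor, being singleton on every mode and dependent on all of $r_1, \dots, r_N$, is set equal to the dense core $\tensor{g}_{r_1 \cdots r_N}$.

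\textbf{Expected obstacle.} I expect the routine but delicate step to be verifying the single-mode reduction itself from \eqref{eq:kron-tensor-sequences-indexing}, and the genuinely substantive step to be the TR and Tucker cases, since these are precisely where SeKron departs from a plain Kronecker chain: both arguments hinge on the terminal factor's dependence on the full rank history being able to realize, respectively, the ring closure and the dense core. I would also take care to confirm that restricting a factor to depend on fewer rank indices than \eqref{eq:ksd-full-rank-tensor-form} permits is a legitimate specialization, and that the dimension bookkeeping $\prod_k a_n^{(k)} = w_n$ holds for each chosen shape assignment.
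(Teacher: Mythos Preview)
Your proposal is correct and follows the same constructive strategy as the paper: choose single-mode factor shapes so that the mixed-radix indexing \eqref{eq:kron-tensor-sequences-indexing} collapses each Kronecker sequence to a scalar product, then match the resulting sum against each target format with the appropriate $S$ and rank pattern (CP with $S=N$ and $R_1=R$; Tucker and TR with $S=N{+}1$, the terminal factor carrying the dense core or closing the ring). The only minor deviations are that you treat TT directly with $S=N$ whereas the paper obtains it as the $R_1^{(\text{TR})}=R_{N+1}^{(\text{TR})}=1$ specialization of TR, and you handle the surplus rank dependencies by making factors constant in the unused $r_j$ indices whereas the paper phrases the same restriction via nullity constraints---your formulation is arguably the cleaner of the two.
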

\begin{proof}
See Appendix.
\end{proof}

\subsection{Convolution with SeKron Structures}\label{ss:convolution-with-kronecker-sequence-decomposition}

{
\setlength{\abovedisplayskip}{0pt}
\setlength{\belowdisplayskip}{0pt}
In this section, we provide an efficient algorithm for performing a convolution operation using a tensor represented by a sequence of Kronecker factors. Assuming $\tensor{w}$ is approximated as a sequence of Kronecker products using SeKron, i.e., $\tensor{w} \approx \widehat{\tensor{w}}$ and
\begin{equation}
    \widehat{\tensor{w}} = 
    \sum_{r_1 = 1}^{\widehat{R}_1} \tensor{a}_{r_1}^{(1)} \otimes
    \sum_{r_2 = 1}^{\widehat{R}_2} \tensor{a}_{r_1 r_2}^{(2)} \otimes
    \dots
    \sum_{r_{S-1} = 1}^{\widehat{R}_{S-1}} \tensor{a}_{r_1 \cdots r_{S-1}}^{(S-1)} \otimes
    \tensor{a}_{r_1 \cdots r_{S-1}}^{(S)},
    \label{eq:ksd-low-rank-tensor-form}
\end{equation}
the convolution operation in \eqref{eq:conv2d-scalar-form} can be re-written as
\begin{multline}
  \tensor{y}_{fxy} = 
  \sum_{\substack{i,j,c=1}}^{K_h, K_w, C}
  \Bigg(
  \sum_{r_1 = 1}^{\widehat{R}_1} \tensor{a}_{r_1}^{(1)} 
  \otimes
    \cdots 
    \otimes
    \sum_{r_{S-1} = 1}^{\widehat{R}_{S-1}} \tensor{a}_{r_1 \cdots r_{S-1}}^{(S-1)} \otimes
    \tensor{a}_{r_1 \cdots r_{S-1}}^{(S)} 
    \Bigg)_{fcij} 
  \tensor{X}_{c,i+x,j+y}.
  \label{eq:kseqconv2d-reconstructed-scalar-form}
\end{multline}
Due to the factorization structure of tensor $\widehat{\tensor{w}}$, the projection in \eqref{eq:kseqconv2d-reconstructed-scalar-form} can be carried out without its explicit reconstruction. Instead, the projection can be performed using each of the Kronecker factors \emph{independently}. This property is essential to performing efficient convolution operations using SeKron factorizations, and leads to a reduction in both memory and FLOPs at runtime. In practice, this amounts to replacing one large convolution operation (i.e., one with a large convolution tensor) with a sequence of smaller grouped 3D convolutions, as summarized in Algorithm \ref{alg:kronecker-sequence-conv2d}.
}

The ability to avoid reconstruction at runtime when performing a convolution using any SeKron factorization is the result of the following Theorem:

\begin{restatable}[Linear Mappings with Sequences of Kronecker Products]{theorem}{ksdconv}
Any linear mapping using a given tensor $\tensor{w}$ can be written directly in terms of its Kronecker factors $\tensor{a}_{r_1 \cdots r_k j_1^{(k)} \cdots j_N^{(k)}}^{(k)} \in \R^{a_1^{(k)} \times \cdots \times a_N^{(k)}}$. That is:
\begin{equation}
    \tensor{w}_{i_1 \cdots i_N} \tensor{x}_{i_1 + z_1, \cdots, i_N + z_N} 
    = 
    \tensor{a}_{r_1 j_1^{(1)} \cdots j_N^{(1)}}^{(1)}
    \cdots
    \tensor{a}_{r_1 \cdots r_{S-1} j_1^{(S)} \cdots j_N^{(S)}}^{(S)}
    \tensor{x}_{f(\vec{j}_1) + z_1, \cdots, f(\vec{j}_N) + z_N} 
    \label{eq:linear-mappings-with-kronecker-sequences-scalar-form}
\end{equation}
where $j_n^{(k)}$ terms index Kronecker factors as in  \eqref{eq:kron-tensor-sequences-indexing},  and 
$
    f(\vec{j}_n) = \sum_{k=1}^{S} j_n^{(k)} \prod_{l=k+1}^{S} a_n^{(l)}
$
\end{restatable}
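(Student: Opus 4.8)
The plan is to read the statement in Einstein-summation form, so that the left-hand side denotes $\sum_{i_1,\dots,i_N}\tensor{w}_{i_1\cdots i_N}\,\tensor{x}_{i_1+z_1,\dots,i_N+z_N}$ and the repeated rank indices $r_1,\dots,r_{S-1}$ on the right are implicitly summed. I would then peel the contraction apart mode-by-mode and exploit the fact that the indexing in \eqref{eq:kron-tensor-sequences-indexing} is nothing but a mixed-radix positional expansion of each mode index $i_n$.

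First I would substitute the entrywise form of the full-rank decomposition guaranteed by Theorem~\ref{thm:ksdecomposition}. By the definition of the sequence Kronecker product in \eqref{eq:def-kron-tensor-sequences}, every entry factorizes as
\[
\tensor{w}_{i_1\cdots i_N}=\sum_{r_1=1}^{R_1}\cdots\sum_{r_{S-1}=1}^{R_{S-1}}\tensor{a}^{(1)}_{r_1 j_1^{(1)}\cdots j_N^{(1)}}\cdots\tensor{a}^{(S)}_{r_1\cdots r_{S-1} j_1^{(S)}\cdots j_N^{(S)}},
\]
where, for each mode $n$, the digits $j_n^{(1)},\dots,j_n^{(S)}$ are the functions of $i_n$ defined in \eqref{eq:kron-tensor-sequences-indexing}. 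Plugging this into the contraction leaves the factor $\tensor{x}_{i_1+z_1,\dots,i_N+z_N}$ untouched and reproduces exactly the product of Kronecker factors appearing on the right-hand side of \eqref{eq:linear-mappings-with-kronecker-sequences-scalar-form}; all that then remains is to reconcile the summation variables and the input index.

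The crux is the change of summation variable from each mode index $i_n$ to its digit tuple $(j_n^{(1)},\dots,j_n^{(S)})$. I would prove that, for fixed $n$, the map $i_n\mapsto(j_n^{(1)},\dots,j_n^{(S)})$ given by \eqref{eq:kron-tensor-sequences-indexing} is a bijection from $\{0,\dots,w_n-1\}$, with $w_n=\prod_{k=1}^{S}a_n^{(k)}$, onto the grid $\prod_{k=1}^{S}\{0,\dots,a_n^{(k)}-1\}$, and that its inverse is precisely $f(\vec{j}_n)=\sum_{k=1}^{S}j_n^{(k)}\prod_{l=k+1}^{S}a_n^{(l)}$. This is the uniqueness-of-positional-representation fact: setting $P_k=\prod_{l=k+1}^{S}a_n^{(l)}$ so that $P_{k-1}=a_n^{(k)}P_k$, the floor-and-remainder recursion in \eqref{eq:kron-tensor-sequences-indexing} extracts the base-$(a_n^{(1)},\dots,a_n^{(S)})$ digits of $i_n$. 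A short induction on $k$, carrying the invariant that the running remainder $i_n-\sum_{t<k}j_n^{(t)}P_t$ stays below $P_{k-1}$, simultaneously yields the range bound $0\le j_n^{(k)}<a_n^{(k)}$ and the reconstruction $\sum_{k}j_n^{(k)}P_k=i_n$, i.e.\ $f(\vec{j}_n)=i_n$. This positional lemma is the one step that needs genuine verification rather than mechanical rewriting, so I expect it to be the main obstacle.

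Finally I would perform the substitution $\sum_{i_n}=\sum_{j_n^{(1)}}\cdots\sum_{j_n^{(S)}}$ for all modes $n$ at once, which is legitimate precisely because each per-mode map is a bijection. Under this relabeling the surviving input index becomes $i_n+z_n=f(\vec{j}_n)+z_n$, so that $\tensor{x}_{i_1+z_1,\dots,i_N+z_N}=\tensor{x}_{f(\vec{j}_1)+z_1,\dots,f(\vec{j}_N)+z_N}$, and the contracted expression is verbatim the right-hand side of \eqref{eq:linear-mappings-with-kronecker-sequences-scalar-form} once the $r$- and $j$-sums are read back as Einstein contractions. I note that the shifts $z_n$ are entirely passive and never interact with the factorization, which is exactly why the same argument establishes the claim for arbitrary linear maps and not merely for the convolution in \eqref{eq:conv2d-scalar-form}.
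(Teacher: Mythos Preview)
Your proposal is correct and follows essentially the same route as the paper: substitute the scalar SeKron expansion of $\tensor{w}_{i_1\cdots i_N}$ from \eqref{eq:def-kron-tensor-sequences}/\eqref{eq:kron-tensor-sequences-indexing}, then use the identity $i_n=f(\vec{j}_n)$ to rewrite the $\tensor{x}$ index. The only difference is emphasis: the paper establishes the identity pointwise in $(i_1,\dots,i_N)$ and merely asserts the reconstruction $i_n=f(\vec{j}_n)$, whereas you read both sides under Einstein summation and supply the mixed-radix bijection explicitly to justify the change of summation variables; this extra rigor is exactly what is needed when the theorem is applied in \eqref{eq:kseqconv2d-summations-outside-scalar-form}--\eqref{eq:kseqconv2d-summations-inside-scalar-form}, so it is a welcome addition rather than a departure.
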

\begin{proof}
See Appendix
\end{proof}

{\setlength{\textfloatsep}{0pt}
\setlength{\floatsep}{0pt}
\setlength{\intextsep}{0pt}
\begin{algorithm}[t]
    \KwIn{$\{\tensor{a}^{(i)}\}_{i=1}^{S}, \tensor{a}^{(i)} \in \R^{r_i \times f_i \times c_i \times Kh_i \times Kw_i} \quad \tensor{x} \in \R^{N \times C \times H \times W}$}
    \KwOut{$\tensor{x} \in \R^{N \times \prod_{k=1}^{S}f_k \times H \times W}$}

    \For{$i \gets S, S-1, \dots, 1$}{
        \eIf{$i==S$}
          {
            $\tensor{x} \gets \textsc{Conv3d}(\textsc{Unsqueeze}(\tensor{x}, 1), \textsc{Unsqueeze}(\tensor{a}^{(i)}, 1)$ \\ 
            \tcc{$\R^{N \times \prod_{k=i+1}^{S}f_k \times r_{i}f_{i} \times \prod_{k=1}^{i-1}c_k \times H \times W } \to \R^{N \times \prod_{k=i}^{S}f_k \times r_{i-1} \times \prod_{k=1}^{i-1}c_k \times H \times W } $} 
            $\tensor{x} \gets\textsc{Reshape}_{1}(\tensor{x})$
          }
          {
            $\tensor{x} \gets \textsc{Conv3d}(\tensor{x}, \tensor{a}^{(i)}, \text{groups}=ri)$ \\
            \tcc{$\R^{N \times \prod_{k=i+1}^{S}f_k \times r_{i}f_{i} \times \prod_{k=1}^{i-1}c_k \times H \times W } \to \R^{N \times \prod_{k=i}^{S}f_k \times r_{i} \times \prod_{k=1}^{i-1}c_k \times H \times W } $} 
            $\tensor{x} \gets\textsc{Reshape}_{2}(\tensor{x})$ 
          }
    }
    \Return $\tensor{x}$
    \caption{Convolution operation using a sequence of Kronecker factors}
    \label{alg:kronecker-sequence-conv2d}
\end{algorithm}}

Using \eqref{eq:linear-mappings-with-kronecker-sequences-scalar-form}, we re-write the projection in \eqref{eq:kseqconv2d-reconstructed-scalar-form} directly in terms of Kronecker factors
{
\setlength{\abovedisplayskip}{0pt}
\setlength{\belowdisplayskip}{0pt}
\begin{multline}
    \tensor{y}_{fxy} = 
    \sum_{\substack{\vec{i},\vec{j},\vec{c}, r_1}}
    \tensor{a}_{r_1 f_1 c_1 i_1 j_1}^{(1)}
    \sum_{r_2} \tensor{a}_{r_1, r_2, f_2, c_2, i_2, j_2}^{(2)}
    \cdots \\
    \sum_{r_{S-1}} 
    \tensor{a}_{r_1 \cdots r_{S-1} f_{N-1} c_{N-1} i_{N-1} j_{N-1}}^{(S-1)}
    \tensor{a}_{r_1 \cdots r_{S-1} f_N c_N i_N j_N}^{(S)}
    \tensor{X}_{f(\vec{c}) ,f(\vec{i})+x,f(\vec{j})+y},
    \label{eq:kseqconv2d-summations-outside-scalar-form}
\end{multline}
where $\vec{i} = (i_1, i_2, \dots, i_N), \; \vec{j} = (j_1, j_2, \dots, j_N), \; \vec{c} = (c_1, c_2, \dots, c_N)$ denote vectors containing indices $i_k, j_k, c_k$ that enumerate over positions in tensors $\tensor{a}^{(k)}$. Finally, exchanging the order of summation separates the convolution as follows: 
\begin{equation}
    \tensor{y}_{fxy} = 
    \sum_{i_1,j_1,c_1,r_1}
    \tensor{a}_{r_1 f_1 c_1 i_1 j_1}^{(1)}
    \cdots 
    \sum_{i_N,j_N,c_N}
    \tensor{a}_{r_1 \cdots r_{S-1} f_N c_N i_N j_N}^{(S)}
    \tensor{X}_{f(\vec{c}), f(\vec{i})+x, f(\vec{j})+y}.
    \label{eq:kseqconv2d-summations-inside-scalar-form}
\end{equation}
Overall, the projection in \eqref{eq:kseqconv2d-summations-inside-scalar-form} can be carried out efficiently using a sequence of grouped 3D convolutions with intermediate reshaping operations as described in Algorithm \ref{alg:kronecker-sequence-conv2d}. }
Refer to Appendix for discussions on universal approximation properties of
NN with weights represented using SeKron.

\subsection{Computational Complexity}
\label{ss:computational-complexity}
In order to decompose a given tensor using our method, the sequence length and the Kronecker factor shapes must be specified. Different selections will lead to different FLOPs, parameters, and latency. Specifically, for the decomposition given by \eqref{eq:ksd-low-rank-tensor-form}
for $\widehat{\tensor{w}} \in \R^{f \times c \times h \times w}$ using factors $\tensor{a}_{r_1 \cdots r_i}^{(i)} \in \R^{f_i \times c_i \times h_i \times w_i}$, the compression ratio (CR) and FLOPs reduction ratio (FR) are given by
\begin{equation}
    \text{CR}=\frac
    {
    \prod_{i=1}^{S}  f_i c_i h_i w_i
    }
    {
    \sum_{i=1}^{S} \prod_{k=1}^{i} \widehat{R}_{k} f_i c_i h_i w_i
    }, \quad
    \text{FR}=\frac
    {
    \prod_{i=1}^{S} f_i c_i h_i w_i
    }
    {
    \sum_{i=1}^{S} \left(\prod_{k=i}^{S}F_{k}\right) \left(\prod_{k=1}^{i}  \widehat{R}_{k}\right) \left(\prod_{k=1}^{i} c_{k}\right)
    h_{i} w_{i}
    }.
    \label{eq:kseqconv2d-cr-fr} \vspace{-1mm}
\end{equation}

Applying SeKron to compress DNN models requires a selection strategy for sequence lengths and factor shapes for each layer in a network. We adopt a simple approach that involves selecting configurations that best match a desired CR while also having a lower latency than the original layer being compressed, as FR may not be a good indicator of runtime speedup in practice.

\section{Experimental Results}

To demonstrate the effectiveness of SeKron for model compression, we evaluate different CNN models on both high-level and low-level computer vision tasks. For image classification tasks, we evaluate WideResNet16 \cite{zagoruyko2016wide} and ResNet50 \cite{he2016deep} models on CIFAR-10 \cite{krizhevsky2012cifar10} and ImageNet \cite{krizhevsky2012imagenet}, respectively. For super-resolution task, we evaluate EDSR-8-128 and SRResNet16 trained on DIV2k\cite{agustsson2017div2k}. Lastly, we discuss the latency of our proposed decomposition method.

\subsection{Image Classification Experiments}

First, we evaluate SeKron to compress WideResNet16-8 \cite{zagoruyko2016wide} for image classification on CIFAR-10.
We evaluate our method against a range of decomposition and pruning approaches at various compression rates. Namely, PCA \cite{zhang2016accelerating} which imposes that filter responses lie approximately on a low-rank subspace; SVD-Energy \cite{alvarez2017compression} which imposes a low-rank regularization into the training procedure;  L-Rank (learned rank selection) \cite{idelbayev2020lowrank} which jointly optimizes over matrix elements and ranks; ALDS \cite{liebenwein2021compressing} which provides a global compression framework that finds optimal layer-wise compressions that lead to an overall desired global compression rate; TR \cite{wang2018wide}; TT \cite{novikov2015tensorizing} as well as two recent pruning approaches FT \cite{li2017pruning} and PFP \cite{liebenwein2020provable}. We note that since ALDS and L-Rank are rank selection frameworks, they can be used on top of other decomposition methods such as SeKron. 

Figure~\ref{fig:wrn16-compression}, shows the CIFAR-10 classification performance drop (i.e., $\Delta$ Top-1) versus compression rates achieved with different model compression methods. As this figure suggests, our approach outperforms all other decomposition and pruning methods, at a variety of compression rates indicated as the percentage of retained parameters once model compression is applied. In Table~\ref{tbl:wrn16-compression} we highlight that at a compression rate of $4\times$ SeKron 
outperforms all other methods. In fact, SeKron has a small accuracy drop of $-0.51$, whereas the next best decomposition method (omitting rank selection approaches) suffers a $-1.27$ drop in accuracy.

\begin{figure}[t]
\vspace{-4mm}
\small
\centering
\begin{minipage}{.52\linewidth}
\centering
\captionsetup{width=1\linewidth}
\captionof{table}{Performance of compressed WideResNet16-8 using various methods on CIFAR-10}
\label{tbl:wrn16-compression}
\begin{tabular}{lcc}
\toprule
Model      & CR & $\Delta$ Top-1 (\%) \\ \midrule
ALDS       & 4.0         & $-$0.73        \\
L-Rank     & 4.0         & $-$3.52        \\ \midrule
FT         & 4.1         & $-$1.50         \\
PFP        & 4.0         & $-$0.94        \\  \midrule
SVD        & 4.0         & $-$4.40         \\
PCA        & 4.0         & $-$2.08        \\
SVD Energy & 4.0         & $-$1.27        \\
TT         & 4.0           & $-$2.86        \\
TR          & 4.0          & $-$0.70       \\
CP          & 4.0           & $-$3.13       \\
Tucker      & 4.0           &  $-$ 1.61         \\
SeKron (Ours)       & 4.1         & \textbf{$-$0.51}        \\
\bottomrule
\end{tabular}
\end{minipage}
\hspace{-0.01\textwidth}%
\begin{minipage}{.48\linewidth}
\begin{tikzpicture}[scale=1] \centering
\begin{axis}[
    xmin = 0, xmax = 60,
    ymin = -9.5, ymax =0.5,
    xtick distance = 20,
    ytick distance = 1,
    grid = both,
    minor tick num = 1,
    major grid style = {lightgray},
    minor grid style = {lightgray!25},
    width = \linewidth,
    height = 0.8\linewidth,
    xlabel = {Parameters Retained (\%)},
    ylabel = {$\Delta$ Top-1 (\%)},
  legend style={at={(1,0)},anchor=south east, font=\tiny},
  legend columns=2,every axis plot/.append style={very thick}]
 
\addplot[red, mark=*] file[skip first] {data-wrn16/kseq3.dat};
\addlegendentry{SeKron (Ours)};
\addplot[blue, dashed, mark=square*] file[skip first] {data-wrn16/alds.dat};
\addlegendentry{ALDS};
\addplot[green, mark=x] file[skip first] {data-wrn16/pca.dat};
\addlegendentry{PCA};
\addplot[violet, mark=otimes*] file[skip first] {data-wrn16/svd-energy.dat};
\addlegendentry{SVD-Energy};
\addplot[orange, dashed, mark=diamond*] file[skip first] {data-wrn16/tt.dat};
\addlegendentry{TT};
\addplot[gray, mark=diamond*] file[skip first] {data-wrn16/tr.dat};
\addlegendentry{TR};
\addplot[purple, mark=star] file[skip first] {data-wrn16/ft.dat};
\addlegendentry{FT};
\addplot[magenta, dashed ,  mark=star] file[skip first] {data-wrn16/pfp.dat};
\addlegendentry{PFP};
\addplot[brown, mark=square*] file[skip first] {data-wrn16/lrank.dat};
\addlegendentry{L-Rank};
\addplot[cyan, mark=square*] file[skip first] {data-wrn16/cp.dat};
\addlegendentry{CP};
\addplot[pink, mark=square*] file[skip first] {data-wrn16/tucker.dat};
\addlegendentry{Tucker};
\end{axis}
\end{tikzpicture}
\captionsetup{width=1\linewidth}
\captionof{figure}{Performance drop of WideResNet16-8 at various compression rates achieved by different methods on CIFAR-10.}
\label{fig:wrn16-compression}
\end{minipage}
\vspace{-3mm}
\end{figure}

\begin{table}[t]
\small
\centering
\caption{Performance of ResNet50 using various compression methods measured on ImageNet. $\ssymbol{2}$ indicates models obtained from compressing baselines with different accuracies, for this reason we report accuracy drops of each model with respect to their own baselines as well. The baselines compared are FSNet \cite{yang2020fsnet}, ThiNet \cite{luo2017thinet} CP \cite{he2017channel} MP \cite{liu2019metapruning} and Binary Kronecker \cite{gamal2022convolutional} }
\label{tbl:resnet50-compression}
\setlength\tabcolsep{5pt} 
\begin{tabular}{@{}lccccc@{}}
\toprule
Method & Type & Params (E+6) / CR & FLOPS (E+9) & CPU (ms) & Top-1 / $\Delta$ Top-1 \\ \midrule
FSNet$\ssymbol{2}$   & Other & 13.9 / 1.8 & - & - & 73.11 / $-$2.0 \\ 
ThiNet$\ssymbol{2}$   & \multirow{3}{*}{Pruning} & 12.4 / 2.1 & - & - & 71.01 / $-$1.9 \\
CP$\ssymbol{2}$   &  &  - / 2.0 & - & - & 73.30 / $-$3.0 \\
MP$\ssymbol{2}$   &  & 10.6 / 2.4 & - & - & 73.40 / $-$3.2  \\
\midrule
Tensor Ring & \multirow{4}{*}{Decomposition} & 13.9 / 1.8  & 2.1 & 105 $\pm$ 2 &  73.30 / $-$2.7  \\
Tensor Train &  & 13.3 / 1.9  & 1.9 &  395 $\pm$ 54  & 73.85 / $-$2.1  \\
Binary Kronecker  &  & 12.0 / 2.1 & - & - & 73.95 / $-$ 2.0  \\
SeKron $S=2$ (Ours)  &  & 12.3 / 2.0   & 2.9  & 125 $\pm$ 3 & 74.66 / $-$1.3 \\
SeKron $S=3$ (Ours)  &  & 13.8 / 1.8  & 2.5 & 133 $\pm$ 4\ & \textbf{74.94} / $-$\textbf{1.1} \\ \midrule
Baseline & Uncompressed & 25.5 / 1.0 & 4.10 & 133 $\pm$ 33 & 75.99 / $-$ 0.0  \\ 
\bottomrule
\end{tabular}
\vspace{-3mm}
\end{table}

\begin{table}[t]
\small
\centering
\vspace{-2mm}
\caption{PSNR (dB) performance of compressed SRResNet16 and EDSR-8-128 models using FBD (with basis-64-16) \cite{li2019learning} and our SeKron}
\label{tbl:sr-models-compression}
\setlength\tabcolsep{4pt} 
\begin{tabular}{@{}clcccccc@{}}
\toprule
\multirow{2}{*}{Model} &
  \multicolumn{1}{l}{\multirow{2}{*}{Method}} &
  \multirow{2}{*}{Params (E+6)} &
  \multirow{2}{*}{CR} &
  \multicolumn{4}{c}{Dataset}  \\ \cmidrule(lr){5-8}
                            & \multicolumn{1}{c}{} &               &              & Set5           & Set14          & B100           & Urban100                   \\ \toprule 
\multirow{5}{*}[-0.8ex]{\rotatebox[origin=c]{90}{SRResNet16}} & Baseline             & 1.54          & 1.0          & 32.03          & 28.5           & 27.52          & 25.88            \\ \cmidrule(l){2-8} 
                            & FBD          & 0.65          & 2.4          & 31.84          & 28.38          & 27.39          & 25.54             \\
                            & SeKron                  & 0.65 & 2.4          & \textbf{31.91} & \textbf{28.42} & \textbf{27.43} & \textbf{25.64}   \\ \cmidrule(l){2-8} 
                            & FBD          & 0.36          & 4.3 & 31.49          & 28.18          & 27.28          & 25.20      \\
                            & SeKron                  & 0.37          & 4.2 & \textbf{31.73} & \textbf{28.32} & \textbf{27.37} & \textbf{25.48}   \\ \toprule
\multirow{5}{*}[-0.8ex]{\rotatebox[origin=c]{90}{EDSR-8-128}} & Baseline             & 3.70          & 1.0          & 32.13          & 28.55          & 27.55          & 26.02            \\ \cmidrule(l){2-8} 
                            & FBD          & 1.62          & 2.3          & 31.80          & 28.34          & 27.40          & 25.54          \\
                            & SeKron                  & 1.50          & 2.5          & 31.79          & 28.34          & 27.39          & 25.52          \\ \cmidrule(l){2-8} 
                            & FBD          & 0.48          & 7.8          & 31.64          & 28.23          & 27.32          & 25.31            \\
                            & SeKron                  & 0.47          & 7.8          & \textbf{31.77} & \textbf{28.32} & \textbf{27.38} & \textbf{25.46}  \\ \bottomrule
\end{tabular}
\vspace{-2mm}
\end{table}

Next, we evaluate SeKron to compress ResNet50 for the image classification task on ImageNet dataset. 
In Table~\ref{tbl:resnet50-compression}, we compare our method to other decomposition methods as well as other compression approaches. Most notably, SeKron outperforms all other decomposition methods as well as the other compression approaches, achieving $74.94\%$ Top-1 accuracy which is ${\sim}1.1\%$ greater than the second highest accuracy achieved by using TT decomposition.  At the same time, the proposed method is approximately $3\times$ faster than TT decomposition on a single CPU.  Table~\ref{tbl:resnet50-compression} implies that SeKron is better suited for model compression that targets edge devices with limited CPUs.

\subsection{Super-Resolution Experiments}
\label{ss:SR-experiments}

We applied our method to SRResNet \cite{ledig2017photo} 
and EDSR \cite{lim2017enhanced} super-resolution models. EDSR is quite a huge network while SRResNet is a middle-level network. Thus, for a fast training, we followed \cite{li2019learning} and trained a lighter version of EDSR with 8 residual blocks and 128 channels per convolution in the residual block denoted as EDSR-8-128. Both of these networks were trained on DIV2K \cite{agustsson2017div2k} dataset that contains 1,000 2K images. We test the networks on four benchmarking datasets; Set5 \cite{bevilacqua2012set5}, Set14 \cite{zeyde2012set14}, B100\cite{martin2001b100}, and Urban100 \cite{huang2015urban100}. In our experiments, we only present the results for the $\times$4 scaling factor since it is more challenging than the $\times$2 super-resolution task.  Table~\ref{tbl:sr-models-compression} presents the performances in terms of PSNR measured on the test images for the models once compressed using SeKron along with the original uncompressed models. 

Among model compression methods, Filter Basis Decomposition (FBD) \cite{li2019learning} has been previously shown to achieve state-of-the-art compression on super-resolution CNNs. Therefore, we compare our model compression results with those obtained using FBD as shown in Table~\ref{tbl:sr-models-compression}.  We highlight that our approach outperforms FBD, on all test datasets when compressing SRResNet16 at similar compression rates. As this table suggests, when compression rate is increased, FBD results in much lower PSNRs for both EDSR-8-128 and SRResNet16 compared to our proposed SeKron.

\subsection{Configuring SeKron Considering Latency and Compression Rate}
\label{ss:configuration-selection}

Using the configuration selection strategy proposed in \ref{ss:computational-complexity}, we find that a small sequence length ($S$) is limited to few achievable candidate configurations (and consequently compression rates) that do not sacrifice latency. This is illustrated in Figure~\ref{fig:fast-configurations} for $S=2$ where targeting a CPU latency less than 5 ms and a compression ratio less than 10$\times$ leaves only 3 options for compression. In contrast, increasing the sequence length to $S=3$ leads to a wider range of achievable compression rates (i.e., 129 configurations). Despite the flexibility they provide, large sequence lengths lead to an exponentially larger number of candidate configurations and time-consuming generation of all their runtimes. For this reason, unless otherwise stated, we opted to use $S=3$ in all the above-mentioned experiments as it provided a suitable range of compression rates and a manageable search space. 

As an example, in Table~\ref{tbl:sr-models-compression-latency} we compress EDSR-8-128 using a compression rate of $\text{CR}=2.5\times$, by selecting configurations for each layer that satisfy the desired $\text{CR}$ while simultaneously resulting in a speedup. This led to an overall model speedup of 124ms (compressed) vs. 151ms (uncompressed).

\begin{figure}[t]
\centering
\begin{minipage}{.4\linewidth}
\small
\centering
\captionof{table}{CPU latency (ms) for uncompressed (baseline) and compressed SRResNet16 and EDSR-8-128 models using SeKron}
\label{tbl:sr-models-compression-latency}
\setlength\tabcolsep{4pt} 
\begin{tabular}{@{}llccc@{}}
\toprule
 Model &
  Method &
  CR &
  CPU (ms) \\ 
                            \midrule
\multirow{3}{*}[-0.0ex]{SRResNet16} & Baseline                      & 1.0            & 72 $\pm$ 3   \\
                            & SeKron               & 2.4          &  70 $\pm$ 5   \\
                            & SeKron                          & 4.2 & 70 $\pm$ 2   \\ \midrule
\multirow{3}{*}[-0.0ex]{EDSR-8-128} & Baseline                & 1.0        & 151 $\pm$ 8  \\
                            & SeKron                   & 2.5           & 124 $\pm$ 4  \\
                            & SeKron              & 7.8        & 131 $\pm$ 9  \\ \bottomrule
\end{tabular}
\end{minipage}%
\hspace{0.01\textwidth}%
\begin{minipage}{.59\linewidth}
\centering
\includegraphics[width=0.6\linewidth,trim={0.0cm 0.0cm 0.0cm 0.0cm},clip]{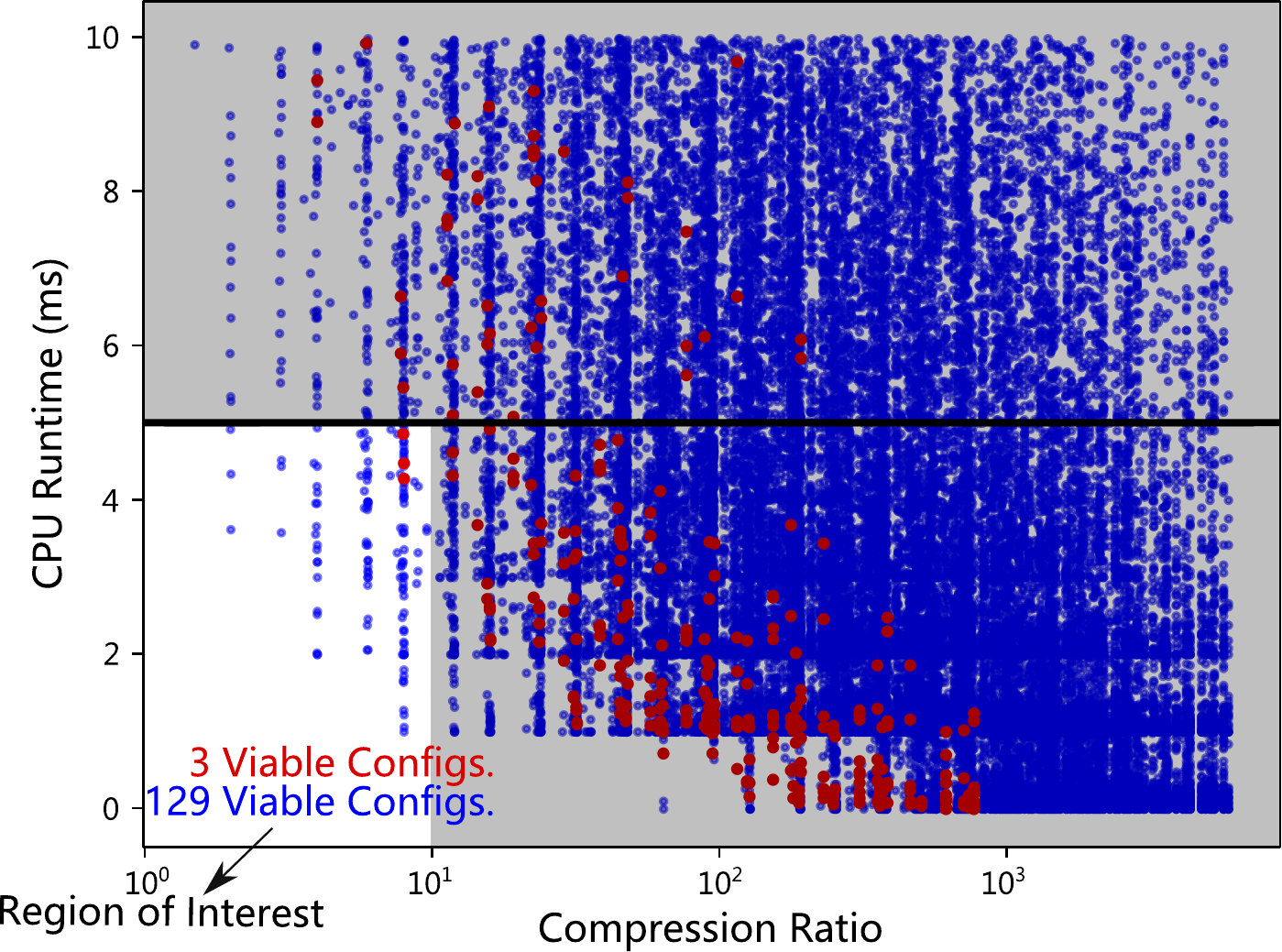}
\captionsetup{width=1\linewidth}
\vspace{-2mm}
\captionof{figure}{CPU latency for candidate configurations obtained using SeKron on a tensor $\tensor{w} \in \R^{512 \times 512 \times 3 \times 3}$ with $S=2$ (red) and $S=3$ (blue), aiming for a speedup (e.g., $<5$ ms) and a typical compression rate (e.g., $<10\times$).
}
\label{fig:fast-configurations}
\end{minipage}
\vspace{-5mm}
\end{figure}

\section{Conclusions}
We introduced SeKron, a tensor decomposition approach using sequences of Kronecker products. SeKron allows for a wide variety of factorization structures to be achieved, while, crucially, sharing the same compression and convolution algorithms. Moreover, SeKron has been shown to generalize popular decomposition methods such as TT, TR, CP and Tucker. Thus, it mitigates the need for time-consuming development of customized convolution algorithms. Unlike other decomposition methods, SeKron is not limited to a single factorization structure, which leads to  improved compressions and reduced runtimes on different hardware. Leveraging SeKron's flexibility, we find efficient factorization structures that outperform previous decomposition methods on various image classification and super-resolution tasks.

\bibliography{bibfile}

\begin{thebibliography}{77}
\providecommand{\natexlab}[1]{#1}
\providecommand{\url}[1]{\texttt{#1}}
\expandafter\ifx\csname urlstyle\endcsname\relax
  \providecommand{\doi}[1]{doi: #1}\else
  \providecommand{\doi}{doi: \begingroup \urlstyle{rm}\Url}\fi

\bibitem[Afshar et~al.(2021)Afshar, Yin, Yan, Qian, Ho, Park, and
  Sun]{afshar2021swift}
Ardavan Afshar, Kejing Yin, Sherry Yan, Cheng Qian, Joyce Ho, Haesun Park, and
  Jimeng Sun.
\newblock Swift: scalable wasserstein factorization for sparse nonnegative
  tensors.
\newblock In \emph{Proceedings of the AAAI Conference on Artificial
  Intelligence}, volume~35, pp.\  6548--6556, 2021.

\bibitem[Agustsson \& Timofte(2017)Agustsson and Timofte]{agustsson2017div2k}
Eirikur Agustsson and Radu Timofte.
\newblock {NTIRE} 2017 challenge on single image super-resolution: Dataset and
  study.
\newblock In \emph{IEEE Conference on Computer Vision and Pattern Recognition
  Workshops (CVPRW)}, pp.\  1122--1131, 2017.

\bibitem[Alvarez \& Salzmann(2017)Alvarez and Salzmann]{alvarez2017compression}
Jose~M Alvarez and Mathieu Salzmann.
\newblock Compression-aware training of deep networks.
\newblock In \emph{Advances in Neural Information Processing Systems}, 2017.

\bibitem[Bevilacqua et~al.(2012)Bevilacqua, Roumy, Guillemot, and line
  Alberi~Morel]{bevilacqua2012set5}
Marco Bevilacqua, Aline Roumy, Christine Guillemot, and Marie line
  Alberi~Morel.
\newblock Low-complexity single-image super-resolution based on nonnegative
  neighbor embedding.
\newblock In \emph{British Machine Vision Conference}, pp.\  135.1--135.10,
  2012.

\bibitem[Boulch(2018)]{boulch2018reducing}
Alexandre Boulch.
\newblock Reducing parameter number in residual networks by sharing weights.
\newblock \emph{Pattern Recognition Letters}, 103:\penalty0 53--59, 2018.

\bibitem[Chollet(2017)]{chollet2017xception}
Fran{\c{c}}ois Chollet.
\newblock Xception: Deep learning with depthwise separable convolutions.
\newblock In \emph{IEEE Conference on Computer Vision and Pattern Recognition},
  pp.\  1251--1258, 2017.

\bibitem[Courbariaux et~al.(2015)Courbariaux, Bengio, and
  David]{courbariaux2015binaryconnect}
Matthieu Courbariaux, Yoshua Bengio, and Jean-Pierre David.
\newblock Binaryconnect: Training deep neural networks with binary weights
  during propagations.
\newblock \emph{Advances in Neural Information Processing Systems}, 28, 2015.

\bibitem[Courbariaux et~al.(2016)Courbariaux, Hubara, Soudry, El-Yaniv, and
  Bengio]{courbariaux2016binarized}
Matthieu Courbariaux, Itay Hubara, Daniel Soudry, Ran El-Yaniv, and Yoshua
  Bengio.
\newblock Binarized neural networks: Training deep neural networks with weights
  and activations constrained to +1 or -1.
\newblock \emph{arXiv preprint arXiv:1602.02830}, 2016.

\bibitem[Denil et~al.(2013)Denil, Shakibi, Dinh, Ranzato, and
  de~Freitas]{denil2013predicting}
Misha Denil, Babak Shakibi, Laurent Dinh, Marc'Aurelio Ranzato, and Nando
  de~Freitas.
\newblock Predicting parameters in deep learning.
\newblock In \emph{Advances in Neural Information Processing Systems}, 2013.

\bibitem[Garipov et~al.(2016)Garipov, Podoprikhin, Novikov, and
  Vetrov]{garipov2016ultimate}
Timur Garipov, Dmitry Podoprikhin, Alexander Novikov, and Dmitry~P. Vetrov.
\newblock Ultimate tensorization: compressing convolutional and {FC} layers
  alike.
\newblock \emph{CoRR}, abs/1611.03214, 2016.

\bibitem[Gysel et~al.(2018)Gysel, Pimentel, Motamedi, and
  Ghiasi]{gysel2018ristretto}
Philipp Gysel, Jon Pimentel, Mohammad Motamedi, and Soheil Ghiasi.
\newblock Ristretto: A framework for empirical study of resource-efficient
  inference in convolutional neural networks.
\newblock \emph{IEEE Transactions on Neural Networks and Learning Systems},
  29\penalty0 (11):\penalty0 5784--5789, 2018.

\bibitem[Hameed et~al.(2022)Hameed, Tahaei, Mosleh, and
  Nia]{gamal2022convolutional}
Marawan Gamal~Abdel Hameed, Marzieh~S. Tahaei, Ali Mosleh, and Vahid~Partovi
  Nia.
\newblock Convolutional neural network compression through generalized
  {K}ronecker product decomposition.
\newblock In \emph{AAAI Conference on Artificial Intelligence}, 2022.

\bibitem[Han et~al.(2015{\natexlab{a}})Han, Mao, and Dally]{han2015deep}
Song Han, Huizi Mao, and William~J Dally.
\newblock Deep compression: Compressing deep neural networks with pruning,
  trained quantization and {H}uffman coding.
\newblock \emph{arXiv preprint arXiv:1510.00149}, 2015{\natexlab{a}}.

\bibitem[Han et~al.(2015{\natexlab{b}})Han, Pool, Tran, and
  Dally]{han2015learning}
Song Han, Jeff Pool, John Tran, and William Dally.
\newblock Learning both weights and connections for efficient neural network.
\newblock \emph{Advances in Neural Information Processing Systems}, 28,
  2015{\natexlab{b}}.

\bibitem[Harshman et~al.(1970)]{harshman1970foundations}
Richard~A Harshman et~al.
\newblock Foundations of the parafac procedure: Models and conditions for an"
  explanatory" multimodal factor analysis.
\newblock 1970.

\bibitem[He et~al.(2016)He, Zhang, Ren, and Sun]{he2016deep}
Kaiming He, Xiangyu Zhang, Shaoqing Ren, and Jian Sun.
\newblock Deep residual learning for image recognition.
\newblock In \emph{IEEE Conference on Computer Vision and Pattern Recognition
  (CVPR)}, pp.\  770--778, 2016.

\bibitem[He et~al.(2018)He, Kang, Dong, Fu, and Yang]{he2018soft}
Yang He, Guoliang Kang, Xuanyi Dong, Yanwei Fu, and Yi~Yang.
\newblock Soft filter pruning for accelerating deep convolutional neural
  networks.
\newblock \emph{arXiv preprint arXiv:1808.06866}, 2018.

\bibitem[He et~al.(2020)He, Ding, Liu, Zhu, Zhang, and Yang]{yang2020learning}
Yang He, Yuhang Ding, Ping Liu, Linchao Zhu, Hanwang Zhang, and Yi~Yang.
\newblock Learning filter pruning criteria for deep convolutional neural
  networks acceleration.
\newblock In \emph{IEEE/CVF Conference on Computer Vision and Pattern
  Recognition (CVPR)}, 2020.

\bibitem[He et~al.(2017)He, Zhang, and Sun]{he2017channel}
Yihui He, Xiangyu Zhang, and Jian Sun.
\newblock Channel pruning for accelerating very deep neural networks.
\newblock In \emph{IEEE International Conference on Computer Vision}, pp.\
  1389--1397, 2017.

\bibitem[Heo et~al.(2019)Heo, Lee, Yun, and Choi]{heo2019knowledge}
Byeongho Heo, Minsik Lee, Sangdoo Yun, and Jin~Young Choi.
\newblock Knowledge distillation with adversarial samples supporting decision
  boundary.
\newblock In \emph{{AAAI} Conference on Artificial Intelligence}, pp.\
  3771--3778, 2019.

\bibitem[Hinton et~al.(2015)Hinton, Vinyals, and Dean]{hinton2015distillation}
Geoffrey~E. Hinton, Oriol Vinyals, and Jeffrey Dean.
\newblock Distilling the knowledge in a neural network.
\newblock \emph{CoRR}, abs/1503.02531, 2015.

\bibitem[Hong et~al.(2020)Hong, Kolda, and Duersch]{hong2020generalized}
David Hong, Tamara~G Kolda, and Jed~A Duersch.
\newblock Generalized canonical polyadic tensor decomposition.
\newblock \emph{SIAM Review}, 62\penalty0 (1):\penalty0 133--163, 2020.

\bibitem[Howard et~al.(2019)Howard, Sandler, Chu, Chen, Chen, Tan, Wang, Zhu,
  Pang, Vasudevan, et~al.]{howard2019searching}
Andrew Howard, Mark Sandler, Grace Chu, Liang-Chieh Chen, Bo~Chen, Mingxing
  Tan, Weijun Wang, Yukun Zhu, Ruoming Pang, Vijay Vasudevan, et~al.
\newblock Searching for {MobileNetV3}.
\newblock In \emph{IEEE/CVF International Conference on Computer Vision}, pp.\
  1314--1324, 2019.

\bibitem[Huang et~al.(2015)Huang, Singh, and Ahuja]{huang2015urban100}
Jia-Bin Huang, Abhishek Singh, and Narendra Ahuja.
\newblock Single image super-resolution from transformed self-exemplars.
\newblock In \emph{IEEE Conference on Computer Vision and Pattern Recognition
  (CVPR)}, June 2015.

\bibitem[Hubara et~al.(2017)Hubara, Courbariaux, Soudry, El-Yaniv, and
  Bengio]{hubara2017quantized}
Itay Hubara, Matthieu Courbariaux, Daniel Soudry, Ran El-Yaniv, and Yoshua
  Bengio.
\newblock Quantized neural networks: Training neural networks with low
  precision weights and activations.
\newblock \emph{The Journal of Machine Learning Research}, 18\penalty0
  (1):\penalty0 6869--6898, 2017.

\bibitem[Iandola et~al.(2016)Iandola, Han, Moskewicz, Ashraf, Dally, and
  Keutzer]{iandola2016squeezenet}
Forrest~N Iandola, Song Han, Matthew~W Moskewicz, Khalid Ashraf, William~J
  Dally, and Kurt Keutzer.
\newblock Squeezenet: Alexnet-level accuracy with 50x fewer parameters and $<$
  0.5 {MB} model size.
\newblock \emph{arXiv preprint arXiv:1602.07360}, 2016.

\bibitem[Idelbayev \& Carreira-Perpinan(2020)Idelbayev and
  Carreira-Perpinan]{idelbayev2020lowrank}
Yerlan Idelbayev and Miguel~A. Carreira-Perpinan.
\newblock Low-rank compression of neural nets: Learning the rank of each layer.
\newblock In \emph{IEEE/CVF Conference on Computer Vision and Pattern
  Recognition (CVPR)}, June 2020.

\bibitem[Jacob et~al.(2018)Jacob, Kligys, Chen, Zhu, Tang, Howard, Adam, and
  Kalenichenko]{jacob2018quantization}
Benoit Jacob, Skirmantas Kligys, Bo~Chen, Menglong Zhu, Matthew Tang, Andrew
  Howard, Hartwig Adam, and Dmitry Kalenichenko.
\newblock Quantization and training of neural networks for efficient
  integer-arithmetic-only inference.
\newblock In \emph{IEEE Conference on Computer Vision and Pattern Recognition},
  pp.\  2704--2713, 2018.

\bibitem[Jaderberg et~al.(2014)Jaderberg, Vedaldi, and
  Zisserman]{jaderberg2014speeding}
Max Jaderberg, Andrea Vedaldi, and Andrew Zisserman.
\newblock Speeding up convolutional neural networks with low rank expansions.
\newblock \emph{arXiv preprint arXiv:1405.3866}, 2014.

\bibitem[Kim et~al.(2015)Kim, Park, Yoo, Choi, Yang, and
  Shin]{kim2015compression}
Yong-Deok Kim, Eunhyeok Park, Sungjoo Yoo, Taelim Choi, Lu~Yang, and Dongjun
  Shin.
\newblock Compression of deep convolutional neural networks for fast and low
  power mobile applications.
\newblock \emph{arXiv preprint arXiv:1511.06530}, 2015.

\bibitem[Kim et~al.(2016)Kim, Park, Yoo, Choi, Yang, and
  Shin]{deok2016compression}
Yong{-}Deok Kim, Eunhyeok Park, Sungjoo Yoo, Taelim Choi, Lu~Yang, and Dongjun
  Shin.
\newblock Compression of deep convolutional neural networks for fast and low
  power mobile applications.
\newblock In \emph{International Conference on Learning Representations}, 2016.

\bibitem[Kokkinos \& Lefkimmiatis(2018)Kokkinos and
  Lefkimmiatis]{kokkinos2018deep}
Filippos Kokkinos and Stamatios Lefkimmiatis.
\newblock Deep image demosaicking using a cascade of convolutional residual
  denoising networks.
\newblock In \emph{European Conference on Computer Vision (ECCV)}, pp.\
  303--319, 2018.

\bibitem[Krizhevsky(2009)]{krizhevsky2012cifar10}
Alex Krizhevsky.
\newblock Learning multiple layers of features from tiny images.
\newblock Technical report, 2009.

\bibitem[Krizhevsky et~al.(2012)Krizhevsky, Sutskever, and
  Hinton]{krizhevsky2012imagenet}
Alex Krizhevsky, Ilya Sutskever, and Geoffrey~E. Hinton.
\newblock {ImageNet} classification with deep convolutional neural networks.
\newblock In \emph{Advances in Neural Information Processing Systems}, pp.\
  1106--1114, 2012.

\bibitem[Lebedev et~al.(2015)Lebedev, Ganin, Rakhuba, Oseledets, and
  Lempitsky]{lebedev2014speeding}
Vadim Lebedev, Yaroslav Ganin, Maksim Rakhuba, Ivan~V. Oseledets, and Victor~S.
  Lempitsky.
\newblock Speeding-up convolutional neural networks using fine-tuned
  {CP}-decomposition.
\newblock In \emph{International Conference on Learning Representations}, 2015.

\bibitem[Ledig et~al.(2017)Ledig, Theis, Husz{\'a}r, Caballero, Cunningham,
  Acosta, Aitken, Tejani, Totz, Wang, et~al.]{ledig2017photo}
Christian Ledig, Lucas Theis, Ferenc Husz{\'a}r, Jose Caballero, Andrew
  Cunningham, Alejandro Acosta, Andrew Aitken, Alykhan Tejani, Johannes Totz,
  Zehan Wang, et~al.
\newblock Photo-realistic single image super-resolution using a generative
  adversarial network.
\newblock In \emph{IEEE Conference on Computer Vision and Pattern Recognition},
  pp.\  4681--4690, 2017.

\bibitem[Li et~al.(2016)Li, Zhang, and Liu]{li2016ternary}
Fengfu Li, Bo~Zhang, and Bin Liu.
\newblock Ternary weight networks.
\newblock \emph{arXiv preprint arXiv:1605.04711}, 2016.

\bibitem[Li et~al.(2017)Li, Kadav, Durdanovic, Samet, and Graf]{li2017pruning}
Hao Li, Asim Kadav, Igor Durdanovic, Hanan Samet, and Hans~Peter Graf.
\newblock Pruning filters for efficient {ConvNets}.
\newblock In \emph{International Conference on Learning Representations}, 2017.

\bibitem[Li et~al.(2019)Li, Gu, Van~Gool, and Timofte]{li2019learning}
Yawei Li, Shuhang Gu, Luc Van~Gool, and Radu Timofte.
\newblock Learning filter basis for convolutional neural network compression.
\newblock In \emph{IEEE International Conference on Computer Vision}, 2019.

\bibitem[Liebenwein et~al.(2020)Liebenwein, Baykal, Lang, Feldman, and
  Rus]{liebenwein2020provable}
Lucas Liebenwein, Cenk Baykal, Harry Lang, Dan Feldman, and Daniela Rus.
\newblock Provable filter pruning for efficient neural networks.
\newblock In \emph{International Conference on Learning Representations}, 2020.

\bibitem[Liebenwein et~al.(2021)Liebenwein, Maalouf, Feldman, and
  Rus]{liebenwein2021compressing}
Lucas Liebenwein, Alaa Maalouf, Dan Feldman, and Daniela Rus.
\newblock Compressing neural networks: Towards determining the optimal
  layer-wise decomposition.
\newblock In \emph{Advances in Neural Information Processing Systems}, 2021.

\bibitem[Lim et~al.(2017)Lim, Son, Kim, Nah, and Lee]{lim2017enhanced}
Bee Lim, Sanghyun Son, Heewon Kim, Seungjun Nah, and Kyoung~Mu Lee.
\newblock Enhanced deep residual networks for single image super-resolution.
\newblock In \emph{The IEEE Conference on Computer Vision and Pattern
  Recognition (CVPR) Workshops}, July 2017.

\bibitem[Liu et~al.(2015)Liu, Wang, Foroosh, Tappen, and Pensky]{liu2015sparse}
Baoyuan Liu, Min Wang, Hassan Foroosh, Marshall Tappen, and Marianna Pensky.
\newblock Sparse convolutional neural networks.
\newblock In \emph{IEEE Conference on Computer Vision and Pattern Recognition},
  pp.\  806--814, 2015.

\bibitem[Liu et~al.(2019)Liu, Mu, Zhang, Guo, Yang, Cheng, and
  Sun]{liu2019metapruning}
Zechun Liu, Haoyuan Mu, Xiangyu Zhang, Zichao Guo, Xin Yang, Kwang-Ting Cheng,
  and Jian Sun.
\newblock Metapruning: Meta learning for automatic neural network channel
  pruning.
\newblock In \emph{IEEE International Conference on Computer Vision}, pp.\
  3296--3305, 2019.

\bibitem[Louizos et~al.(2018)Louizos, Reisser, Blankevoort, Gavves, and
  Welling]{louizos2018relaxed}
Christos Louizos, Matthias Reisser, Tijmen Blankevoort, Efstratios Gavves, and
  Max Welling.
\newblock Relaxed quantization for discretized neural networks.
\newblock \emph{arXiv preprint arXiv:1810.01875}, 2018.

\bibitem[Luo et~al.(2017)Luo, Wu, and Lin]{luo2017thinet}
Jian-Hao Luo, Jianxin Wu, and Weiyao Lin.
\newblock Thi{N}et: A filter level pruning method for deep neural network
  compression.
\newblock In \emph{IEEE International Conference on Computer Vision}, pp.\
  5058--5066, 2017.

\bibitem[Martin et~al.(2001)Martin, Fowlkes, Tal, and Malik]{martin2001b100}
D.~Martin, C.~Fowlkes, D.~Tal, and J.~Malik.
\newblock A database of human segmented natural images and its application to
  evaluating segmentation algorithms and measuring ecological statistics.
\newblock In \emph{IEEE International Conference on Computer Vision}, volume~2,
  pp.\  416--423 vol.2, 2001.

\bibitem[Mirzadeh et~al.(2020)Mirzadeh, Farajtabar, Li, Levine, Matsukawa, and
  Ghasemzadeh]{mirzadeh2020improved}
Seyed{-}Iman Mirzadeh, Mehrdad Farajtabar, Ang Li, Nir Levine, Akihiro
  Matsukawa, and Hassan Ghasemzadeh.
\newblock Improved knowledge distillation via teacher assistant.
\newblock In \emph{{AAAI} Conference on Artificial Intelligence}, pp.\
  5191--5198, 2020.

\bibitem[Novikov et~al.(2015)Novikov, Podoprikhin, Osokin, and
  Vetrov]{novikov2015tensorizing}
Alexander Novikov, Dmitrii Podoprikhin, Anton Osokin, and Dmitry~P Vetrov.
\newblock Tensorizing neural networks.
\newblock In C.~Cortes, N.~Lawrence, D.~Lee, M.~Sugiyama, and R.~Garnett
  (eds.), \emph{Advances in Neural Information Processing Systems}, 2015.

\bibitem[Oseledets(2011)]{oseledets2011tensor}
Ivan~V Oseledets.
\newblock Tensor-train decomposition.
\newblock \emph{SIAM Journal on Scientific Computing}, 33\penalty0
  (5):\penalty0 2295--2317, 2011.

\bibitem[Phan et~al.(2020)Phan, Sobolev, Sozykin, Ermilov, Gusak,
  Tichavsk{\`y}, Glukhov, Oseledets, and Cichocki]{phan2020stable}
Anh-Huy Phan, Konstantin Sobolev, Konstantin Sozykin, Dmitry Ermilov, Julia
  Gusak, Petr Tichavsk{\`y}, Valeriy Glukhov, Ivan Oseledets, and Andrzej
  Cichocki.
\newblock Stable low-rank tensor decomposition for compression of convolutional
  neural network.
\newblock In \emph{European Conference on Computer Vision}, pp.\  522--539.
  Springer, 2020.

\bibitem[Rastegari et~al.(2016)Rastegari, Ordonez, Redmon, and
  Farhadi]{rastegari2016xnor}
Mohammad Rastegari, Vicente Ordonez, Joseph Redmon, and Ali Farhadi.
\newblock Xnor-net: {ImageNet} classification using binary convolutional neural
  networks.
\newblock In \emph{European Conference on Computer Vision}, pp.\  525--542.
  Springer, 2016.

\bibitem[Ren et~al.(2015)Ren, He, Girshick, and Sun]{ren2015faster}
Shaoqing Ren, Kaiming He, Ross Girshick, and Jian Sun.
\newblock Faster {R-CNN}: Towards real-time object detection with region
  proposal networks.
\newblock \emph{Advances in Neural Information Processing Systems},
  28:\penalty0 91--99, 2015.

\bibitem[Sandler et~al.(2018)Sandler, Howard, Zhu, Zhmoginov, and
  Chen]{sandler2018mobilenetv2}
Mark Sandler, Andrew Howard, Menglong Zhu, Andrey Zhmoginov, and Liang-Chieh
  Chen.
\newblock Mobilenetv2: Inverted residuals and linear bottlenecks.
\newblock In \emph{IEEE Conference on Computer Vision and Pattern Recognition},
  pp.\  4510--4520, 2018.

\bibitem[Schuler et~al.(2015)Schuler, Hirsch, Harmeling, and
  Sch{\"o}lkopf]{schuler2015learning}
Christian~J Schuler, Michael Hirsch, Stefan Harmeling, and Bernhard
  Sch{\"o}lkopf.
\newblock Learning to deblur.
\newblock \emph{IEEE Transactions on Pattern Analysis and Machine
  Intelligence}, 38\penalty0 (7):\penalty0 1439--1451, 2015.

\bibitem[Tan \& Le(2019)Tan and Le]{tan2019efficientnet}
Mingxing Tan and Quoc Le.
\newblock {EfficientNet}: Rethinking model scaling for convolutional neural
  networks.
\newblock In \emph{International Conference on Machine Learning}, pp.\
  6105--6114. PMLR, 2019.

\bibitem[Tang et~al.(2017)Tang, Hua, and Wang]{tang2017train}
Wei Tang, Gang Hua, and Liang Wang.
\newblock How to train a compact binary neural network with high accuracy?
\newblock In \emph{AAAI Conference on Artificial Intelligence}, 2017.

\bibitem[Thakker et~al.(2019)Thakker, Beu, Gope, Zhou, Fedorov, Dasika, and
  Mattina]{thakker2019compressing}
Urmish Thakker, Jesse Beu, Dibakar Gope, Chu Zhou, Igor Fedorov, Ganesh Dasika,
  and Matthew Mattina.
\newblock Compressing {RNN}s for {IoT} devices by 15-38x using {K}ronecker
  products.
\newblock \emph{arXiv preprint arXiv:1906.02876}, 2019.

\bibitem[Tucker(1963)]{tucker1963implications}
Ledyard~R Tucker.
\newblock Implications of factor analysis of three-way matrices for measurement
  of change.
\newblock \emph{Problems in Measuring Change}, 15\penalty0 (122-137):\penalty0
  3, 1963.

\bibitem[Uhlich et~al.(2019)Uhlich, Mauch, Cardinaux, Yoshiyama, Garcia,
  Tiedemann, Kemp, and Nakamura]{uhlich2019mixed}
Stefan Uhlich, Lukas Mauch, Fabien Cardinaux, Kazuki Yoshiyama, Javier~Alonso
  Garcia, Stephen Tiedemann, Thomas Kemp, and Akira Nakamura.
\newblock Mixed precision {DNN}s: All you need is a good parametrization.
\newblock In \emph{International Conference on Learning Representations}, 2019.

\bibitem[Wang et~al.(2017)Wang, Aggarwal, and Aeron]{wang2017efficient}
Wenqi Wang, Vaneet Aggarwal, and Shuchin Aeron.
\newblock Efficient low rank tensor ring completion.
\newblock In \emph{IEEE International Conference on Computer Vision}, pp.\
  5697--5705, 2017.

\bibitem[Wang et~al.(2018{\natexlab{a}})Wang, Sun, Eriksson, Wang, and
  Aggarwal]{wang2018wide}
Wenqi Wang, Yifan Sun, Brian Eriksson, Wenlin Wang, and Vaneet Aggarwal.
\newblock Wide compression: Tensor ring nets.
\newblock In \emph{IEEE Conference on Computer Vision and Pattern Recognition
  (CVPR)}, June 2018{\natexlab{a}}.

\bibitem[Wang et~al.(2018{\natexlab{b}})Wang, Yu, Wu, Gu, Liu, Dong, Qiao, and
  Change~Loy]{wang2018esrgan}
Xintao Wang, Ke~Yu, Shixiang Wu, Jinjin Gu, Yihao Liu, Chao Dong, Yu~Qiao, and
  Chen Change~Loy.
\newblock {ESRGAN}: Enhanced super-resolution generative adversarial networks.
\newblock In \emph{European Conference on European Conference Vision (ECCV)},
  pp.\  0--0, 2018{\natexlab{b}}.

\bibitem[Wen et~al.(2016)Wen, Wu, Wang, Chen, and Li]{wen2016learning}
Wei Wen, Chunpeng Wu, Yandan Wang, Yiran Chen, and Hai Li.
\newblock Learning structured sparsity in deep neural networks.
\newblock \emph{Advances in Neural Information Processing Systems}, 29, 2016.

\bibitem[Yang et~al.(2020)Yang, Yu, Jojic, Huan, and Huang]{yang2020fsnet}
Yingzhen Yang, Jiahui Yu, Nebojsa Jojic, Jun Huan, and Thomas~S. Huang.
\newblock Fsnet: Compression of deep convolutional neural networks by filter
  summary.
\newblock In \emph{International Conference on Learning Representations}, 2020.

\bibitem[Yin et~al.(2021)Yin, Sui, Liao, and Yuan]{yin2021towards}
Miao Yin, Yang Sui, Siyu Liao, and Bo~Yuan.
\newblock Towards efficient tensor decomposition-based {DNN} model compression
  with optimization framework.
\newblock In \emph{IEEE/CVF Conference on Computer Vision and Pattern
  Recognition (CVPR)}, pp.\  10674--10683, June 2021.

\bibitem[Zagoruyko \& Komodakis(2016)Zagoruyko and
  Komodakis]{zagoruyko2016wide}
Sergey Zagoruyko and Nikos Komodakis.
\newblock Wide residual networks.
\newblock \emph{arXiv preprint arXiv:1605.07146}, 2016.

\bibitem[Zeyde et~al.(2012)Zeyde, Elad, and Protter]{zeyde2012set14}
Roman Zeyde, Michael Elad, and Matan Protter.
\newblock On single image scale-up using sparse-representations.
\newblock In Jean-Daniel Boissonnat, Patrick Chenin, Albert Cohen, Christian
  Gout, Tom Lyche, Marie-Laurence Mazure, and Larry Schumaker (eds.),
  \emph{Curves and Surfaces}, pp.\  711--730, 2012.

\bibitem[Zhang et~al.(2018{\natexlab{a}})Zhang, Yang, Ye, and Hua]{zhang2018lq}
Dongqing Zhang, Jiaolong Yang, Dongqiangzi Ye, and Gang Hua.
\newblock {LQ}-nets: Learned quantization for highly accurate and compact deep
  neural networks.
\newblock In \emph{European Conference on Computer Vision (ECCV)}, pp.\
  365--382, 2018{\natexlab{a}}.

\bibitem[Zhang et~al.(2018{\natexlab{b}})Zhang, Ye, Zhang, Tang, Wen, Fardad,
  and Wang]{zhang2018systematic}
Tianyun Zhang, Shaokai Ye, Kaiqi Zhang, Jian Tang, Wujie Wen, Makan Fardad, and
  Yanzhi Wang.
\newblock A systematic {DNN} weight pruning framework using alternating
  direction method of multipliers.
\newblock In \emph{European Conference on Computer Vision (ECCV)}, pp.\
  184--199, 2018{\natexlab{b}}.

\bibitem[Zhang et~al.(2016)Zhang, Zou, He, and Sun]{zhang2016accelerating}
Xiangyu Zhang, Jianhua Zou, Kaiming He, and Jian Sun.
\newblock Accelerating very deep convolutional networks for classification and
  detection.
\newblock \emph{IEEE Transactions on Pattern Analysis and Machine
  Intelligence}, 38\penalty0 (10):\penalty0 1943--1955, 2016.

\bibitem[Zhang et~al.(2018{\natexlab{c}})Zhang, Zhou, Lin, and
  Sun]{zhang2018shufflenet}
Xiangyu Zhang, Xinyu Zhou, Mengxiao Lin, and Jian Sun.
\newblock {ShuffleNet}: An extremely efficient convolutional neural network for
  mobile devices.
\newblock In \emph{IEEE Conference on Computer Vision and Pattern Recognition},
  pp.\  6848--6856, 2018{\natexlab{c}}.

\bibitem[Zheng et~al.(2021)Zheng, Huang, Zhao, Zhao, and Jiang]{zheng2021fully}
Yu-Bang Zheng, Ting-Zhu Huang, Xi-Le Zhao, Qibin Zhao, and Tai-Xiang Jiang.
\newblock Fully-connected tensor network decomposition and its application to
  higher-order tensor completion.
\newblock In \emph{Proceedings of the AAAI Conference on Artificial
  Intelligence}, volume~35, pp.\  11071--11078, 2021.

\bibitem[Zhou et~al.(2016)Zhou, Alvarez, and Porikli]{zhou2016less}
Hao Zhou, Jose~M Alvarez, and Fatih Porikli.
\newblock Less is more: Towards compact {CNNs}.
\newblock In \emph{European Conference on Computer Vision}, pp.\  662--677,
  2016.

\bibitem[Zhou et~al.(2015)Zhou, Wu, Wu, and Zhou]{zhou2015exploiting}
Shuchang Zhou, Jia-Nan Wu, Yuxin Wu, and Xinyu Zhou.
\newblock Exploiting local structures with the kronecker layer in convolutional
  networks.
\newblock \emph{arXiv preprint arXiv:1512.09194}, 2015.

\bibitem[Zhu et~al.(2016)Zhu, Han, Mao, and Dally]{zhu2016trained}
Chenzhuo Zhu, Song Han, Huizi Mao, and William~J Dally.
\newblock Trained ternary quantization.
\newblock \emph{arXiv preprint arXiv:1612.01064}, 2016.

\bibitem[Zhuang et~al.(2018)Zhuang, Tan, Zhuang, Liu, Guo, Wu, Huang, and
  Zhu]{zhuang2018discrimination}
Zhuangwei Zhuang, Mingkui Tan, Bohan Zhuang, Jing Liu, Yong Guo, Qingyao Wu,
  Junzhou Huang, and Jinhui Zhu.
\newblock Discrimination-aware channel pruning for deep neural networks.
\newblock \emph{Advances in Neural Information Processing Systems}, 31, 2018.

\end{thebibliography}
\bibliographystyle{iclr2023_conference}

\section{Appendix}
\ksdecomposition*
\begin{proof}
First, we define intermediate tensors
\begin{multline}
    \tensor{b}_{r_1 \cdots r_k}^{(k)} \triangleq 
    \sum_{r_{k+1}}^{R_{k+1}} 
    \tensor{a}_{r_1 \cdots  r_{k+1}}^{(k+1)} \otimes
    \sum_{r_{k+2}}^{R_{k+2}} 
    \tensor{a}_{r_1 \cdots r_{k+2}}^{(k+2)} \otimes
    \cdots
    \otimes
    \sum_{r_{S-1}}^{R_{S-1}}
    \tensor{a}_{r_1  \cdots r_{S-1}}^{(S-1)}
    \otimes
    \tensor{a}_{r_1 \cdots r_{S-1}}^{(S)}
    \tag{\ref{eq:intermediate-tensors-full-rank} revisited}
\end{multline}
Then the reconstruction error can be written as
\begin{equation}
    \Bigg\|\tensor{w}_{r_1 \cdots r_{k-1}}^{(k)} - \sum_{r_k=1}^{\widehat{R}_k} \tensor{a}^{(k)}_{r_1 \cdots r_k} \otimes
    \tensor{b}^{(k)}_{r_1 \cdots r_k} \Bigg\|_{\mathrm{F}}^{2}
    \label{eq:appendix-reconstruction-intermediate-tensors}
\end{equation}
where $\tensor{w}^{(1)}$ is the initial tensor being decomposed. As described in Section \ref{ss:kronecker-sequence-decomposition}, using reshaping operations
\begin{equation}
\mat{w}_{r_1 \cdots r_{k-1}}^{(k)} = \textsc{Mat}(\textsc{Unfold}(\tensor{w}_{r_1 \cdots r_{k-1}}^{(k)}, \vec{d}_{\tensor{b}_{r_1 \cdots r_k}^{(k)}})),
\tag{\ref{eq:reshaping-operations-w} revisited}
\end{equation}
\begin{equation}
\vec{a}_{r_1  \cdots r_k}^{(k)} =  \textsc{Vec}(\textsc{Unfold}(\tensor{a}_{r_1 \cdots r_{k}}^{(k)}, \vec{d}_{\tensor{i}_{\tensor{a}_{r_1 \cdots r_k}^{(k)}}})), \; \;
\vec{b}_{r_1 \cdots r_{k}}^{(k)} = \textsc{Vec}(\tensor{b}_{r_1 \cdots r_k}^{(k)}),
\tag{\ref{eq:reshaping-operations-a-b} revisited}
\end{equation}
that preserve the sum of squares allows us to equivalently write the reconstruction error as
\begin{equation}
    \Bigg\|\mat{w}_{r_1 \cdots r_{k-1}}^{(k)} - \sum_{r_k=1}^{\widehat{R}_k} \vec{a}^{(k)}_{r_1 \cdots r_k} 
    \vec{b}^{(k)\top}_{r_1 \cdots r_k} \Bigg\|_{\mathrm{F}}^{2}.
    \label{eq:appendix-reconstruction-intermediate-vectors}
\end{equation}
Now consider the singular value decomposition of matrix
$\mat{w}_{r_1 \cdots r_{k-1}}^{(k)}$ 
and let 
$\vec{u}_{r_1 \cdots r_k}^{(k)}, \vec{v}_{r_1 \cdots r_k}^{(k)}$ 
denote its left and right singular vectors, respectively (with the right singular vector scaled according to its corresponding singlar value). Set $\vec{a}_{r_1 \cdots r_k}^{(k)} = \vec{u}_{r_k}^{(k)}$ and define and define error terms
\begin{equation}
    \vec{\delta}_{r_1 \cdots r_k}^{(k)} = 
    \vec{v}_{r_1 \cdots r_k}^{(k)} - \vec{b}_{r_1 \cdots r_{k}}^{(k)}, \quad
    \epsilon_{r_1 \cdots r_k}^{(k)} = \|\vec{\delta}_{r_1 \cdots r_k}^{(k)} \|.
\end{equation}
Expanding out \eqref{eq:appendix-reconstruction-intermediate-vectors} reveals its recursive form
\begin{align}
    \label{eq:appendix-reconstruction-intermediate-vectors-first-expansion}
    \Bigg\|\mat{w}_{r_1 \cdots r_{k-1}}^{(k)} &- \sum_{r_k=1}^{\widehat{R}_k} \vec{a}^{(k)}_{r_1 \cdots r_k} 
    \vec{b}^{(k)\top}_{r_1 \cdots r_k} \Bigg\|_{\mathrm{F}}^{2}
    =
    \Bigg\|\mat{w}_{r_1 \cdots r_{k-1}}^{(k)} - \sum_{r_k=1}^{\widehat{R}_k} \vec{a}^{(k)}_{r_1 \cdots r_k} 
    (\vec{v}_{r_k}^{(k)} - \vec{\delta}_{r_1 \cdots r_k}^{(k)})^\top
    \Bigg\|_{\mathrm{F}}^{2}  \\
    &=
    \Bigg\|\mat{w}_{r_1 \cdots r_{k-1}}^{(k)} - \sum_{r_k=1}^{\widehat{R}_k} \vec{a}^{(k)}_{r_1 \cdots r_k} 
    \vec{v}_{r_1 \cdots r_k}^{(k)\top} +
    \sum_{r_k=1}^{\widehat{R}_k}
    \vec{a}^{(k)}_{r_1 \cdots r_k}\vec{\delta}_{r_1 \cdots r_k}^{(k)\top}
    \Bigg\|_{\mathrm{F}}^{2}  \\
    &\leq
    \Bigg\|\mat{w}_{r_1 \cdots r_{k-1}}^{(k)} - \sum_{r_k=1}^{\widehat{R}_k} \vec{a}^{(k)}_{r_1 \cdots r_k} 
     \vec{v}_{r_1 \cdots r_k}^{(k)\top} \Bigg\|_{\mathrm{F}}^{2}
    +
    \sum_{r_k=1}^{\widehat{R}_k}
    \Bigg\|
    \vec{a}^{(k)}_{r_1 \cdots r_k}\vec{\delta}_{r_1 \cdots r_k}^{(k)\top} \Bigg\|_{\mathrm{F}}^{2} \\
    &\leq
    \Bigg\|\mat{w}_{r_1 \cdots r_{k-1}}^{(k)} - \sum_{r_k=1}^{\widehat{R}_k} \vec{a}^{(k)}_{r_1 \cdots r_k} 
    \vec{v}_{r_1 \cdots r_k}^{(k)\top} \Bigg\|_{\mathrm{F}}^{2}
    +
    \sum_{r_k=1}^{\widehat{R}_k}
    d^{(k)} \epsilon_{r_1 \cdots r_k}^{(k)} \\
    &= 
    \Bigg(
    \sum_{r_k = \widehat{R}_k+1}^{R_k} \sigma_{r_k}^{2}(\mat{W}_{r_1 \cdots r_{k-1}}^{(k)})
    \Bigg)
    +
    \Bigg(
    \sum_{r_k=1}^{\widehat{R}_k}
    d^{(k)} \epsilon_{r_1 \cdots r_k}^{(k)} \Bigg)\\
    &= 
    \sum_{r_k = \widehat{R}_k+1}^{R_k} \sigma_{r_k}^{2}(\mat{W}_{r_1 \cdots r_{k-1}}^{(k)})
    +
    \sum_{r_k=1}^{\widehat{R}_k}
    d^{(k)} \Bigg\|\vec{v}_{r_1 \cdots r_k}^{(k)} - \vec{b}_{r_1 \cdots r_k}^{(k)} \Bigg\|_{\mathrm{F}}^{2} 
    \label{eq:appendix-reconstruction-error-intermediate-vectors-unrolled-once}
\end{align}
where $d^{(k)} \in \mathbb{N}$ is the number of dimensions of vector
$\vec{a}_{r_1 \cdots r_k}^{(k)}$ 
and $R_k$ is the rank of matrix $\mat{w}_{r_1 \cdots r_{k-1}}^{(k)}$, $\sigma_{r_k}(\tensor{w}_{r_1 \cdots r_{k-1}}^{(k)})$ denotes the $r_k^{\text{th}}$ singular value of tensor $\tensor{w}_{r_1 \cdots r_{k-1}}^{(k)}$. By reshaping vectors $\vec{v}_{r_1 \cdots r_k}^{(k)}, \vec{b}_{r_1 \cdots r_k}^{(k)}$ to matrices according to 
\begin{equation}
    \mat{v}_{r_1 \cdots r_k}^{(k)} = \textsc{Mat}\Bigg(\textsc{Unfold}\Bigg(\textsc{Vec}^{-1}\Bigg(\vec{v}_{r_1 \cdots r_k}^{(k)}, \prod_{s=k+1}^{S}\vec{d}^{(s)}\Bigg), \prod_{s=k+2}^{S}\vec{d}^{(s)}\Bigg)\Bigg),
\end{equation}
\begin{equation}
    \mat{b}_{r_1 \cdots r_k}^{(k)} = \textsc{Mat}\Bigg(\textsc{Unfold}\Bigg(\textsc{Vec}^{-1}\Bigg(\vec{b}_{r_1 \cdots r_k}^{(k)}, \prod_{s=k+1}^{S}\vec{d}^{(s)}\Bigg), 
    \prod_{s=k+2}^{S}\vec{d}^{(s)}\Bigg)\Bigg), 
    \label{eq:appendix-reshaping-operations}
\end{equation}
where $\vec{d}^{(s)} = (a_1^{(s)}, \dots, a_N^{(s)})$ describes the dimensions of the $s^{\text{th}}$ factor, we can re-write \eqref{eq:appendix-reconstruction-error-intermediate-vectors-unrolled-once} as
\begin{align}
    \sum_{r_k = \widehat{R}_k+1}^{R_k} & \sigma_{r_k}^{2}(\mat{W}_{r_1 \cdots r_{k-1}}^{(k)})
    +
    \sum_{r_k=1}^{\widehat{R}_k}
    d^{(k)} \Bigg\|\vec{v}_{r_1 \cdots r_k}^{(k)} - \vec{b}_{r_1 \cdots r_k}^{(k)}\Bigg\|_{\mathrm{F}}^{2} \\
    &=
    \sum_{r_k = \widehat{R}_k+1}^{R_k} \sigma_{r_k}^{2}(\mat{W}_{r_1 \cdots r_{k-1}}^{(k)})
    +
    \sum_{r_k=1}^{\widehat{R}_k}
    d^{(k)} \Bigg\|\mat{v}_{r_1 \cdots r_k}^{(k)} - \mat{b}_{r_1 \cdots r_k}^{(k)}\Bigg\|_{\mathrm{F}}^{2} \\
    &= 
    \sum_{r_k = \widehat{R}_k+1}^{R_k} \sigma_{r_k}^{2}(\mat{W}_{r_1 \cdots r_{k-1}}^{(k)})
    +
    \sum_{r_k=1}^{\widehat{R}_k}
    d^{(k)} \Bigg\|\mat{v}_{r_1 \cdots r_k}^{(k)} - \sum_{r_{k+1}=1}^{\widehat{R}_{k+1}} \vec{a}_{r_1 \cdots r_{k+1}}^{(k+1)} \vec{b}_{r_1 \cdots r_{k+1}}^{(k+1)\top}\Bigg\|_{\mathrm{F}}^{2}.
\end{align}
The last line reveals the recursive nature of the formula (compare with \eqref{eq:appendix-reconstruction-intermediate-vectors-first-expansion}). Unrolling the recursive formula for $k=1, \dots, S-1$, by setting $\mat{w}_{r_1 \cdots r_{k}}^{(k+1)} \gets \mat{v}_{r_1 \cdots r_k}^{(k)}$, leads to the following formula for the reconstruction error:
\begin{multline}
\varepsilon_{\text{SeKron}}(\mat{W}, \vec{r}, \mat{D}) 
    =
    \sum_{r_1 = \widehat{R}_1 + 1}^{R_1} \sigma_{r_1}^{2}(\mat{W}^{(1)})
    +
    d^{(1)}\sum_{r_1=1}^{\widehat{R}_1}\sum_{r_2 = \widehat{R}_2 + 1}^{R_2}
    \sigma_{r_2}^{2}(\mat{w}_{r_1}^{(2)})
    + \cdots \\
    + d^{(1)}  d^{(2)} \cdots d^{(S-2)}
    \sum_{r_1, r_2, \dots, r_{S-2}=1}^{\widehat{R}_1, \cdots, \widehat{R}_{S-2}}
    \sum_{r_{S-1} = \widehat{R}_{S-1}+1}^{R_{S-1}}
    \sigma_{r_{S-1}}^{2}(\mat{w}_{r_1 \cdots r_{S-2}}^{(S-1)})
    \label{eq:appendix-reconstruction-error-unrolled}
\end{multline}
where $\vec{r} = (\widehat{R}_1, \dots, \widehat{R}_{S-1})$ contains the rank values, $\mat{D}_s = \vec{d}^{(s)}$ contains the Kronecker factor shapes and is referred to as the $\mat{D} \vec{r}$-SeKron approximation error (note that the dependency of intermediate matrices $\mat{w}_{r_1 \cdots r_{k-1}}^{(k)}$ on Kronecker factor shapes $\mat{D}$ is implied).
Selecting $\widehat{R}_i = R_i \: \forall i$ in \eqref{eq:appendix-reconstruction-error-unrolled} results in zero reconstruction error.
\end{proof}

\sekrongenerality*
\begin{proof}
The SeKron decomposition of tensor $\tensor{w}$ is given by
\begin{equation}
    \tensor{w}_{i_1 \cdots i_N} = 
    \sum_{r_1, \dots, r_S = 1}^{R_1, \cdots, R_S} 
    \tensor{a}_{r_1 j_1^{(1)} \cdots j_N^{(1)}}^{(1)} \cdots
    \tensor{a}_{r_1 \cdots r_{S-1} j_1^{(S)} \cdots j_N^{(S)}}^{(S)}
    \label{eq:appendix-ksd-summations-outside-scalar-form-full-rank}
\end{equation}
where $\tensor{a}^{(k)} \in \R^{a_1^{(k)} \times \cdots a_N^{(k)}}$ and \begin{equation}
j_n^{(k)} =
    \begin{cases}
        i_n - \sum_{t=1}^{k-2} j_{n}^{(t)} \prod_{l=t+1}^{S} a_n^{(l)} \text{mod} \, a_n^{(S)} \quad & k=S, \\
    \left\lfloor \frac
    { i_n - \sum_{t=1}^{k-1} j_{n}^{(t)} \prod_{l=t+1}^{S} a_n^{(l)} }
    {\prod_{l=k+1}^{S} a_n^{(l)}}\right\rfloor \quad 
    & \text{otherwise},
    \end{cases}
    \tag{\ref{eq:kron-tensor-sequences-indexing} revisted}
\end{equation}
The CP decomposition of tensor $\tensor{w}$ in scalar form is
\begin{equation}
    \tensor{w}_{i_1 \cdots i_N} = \sum_{r=1}^{R^{(\text{CP})}} \tensor{a}_{r i_1}^{(\text{CP}_1)} \cdots \tensor{a}_{r i_N}^{(\text{CP}_N)}
\end{equation}
where $\tensor{a}^{(\text{CP}_k)} \in \R^{R^{(\text{CP})} \times w_k}$. 
Configuring the SeKron decomposition in \eqref{eq:appendix-ksd-summations-outside-scalar-form-full-rank} such that $S=N; \; R_1=R^{(\text{CP})}; \; R_2, \dots, R_N=1$ and $a_n^{(n)}=w_n$ for $n=1, \dots, N$ leads to the equivalent form
\begin{equation}
    \tensor{w}_{i_1 \cdots i_N} = 
    \sum_{r_1=1}^{R^{(\text{CP})}} 
    \tensor{a}_{r_1 i_1 1 \cdots 1}^{(1)} \cdots
    \tensor{a}_{r_, 1 \cdots 1 i_N}^{(N)}.
    \label{eq:appendix-ksd-summations-outside-scalar-form}
\end{equation}
The Tucker decomposition of tensor $\tensor{w}$ is given by
\begin{equation}
    \tensor{w}_{i_1 \cdots i_N} = 
    \sum_{r_1=1, \dots, r_N}^{R_1^{(\text{T})}, \dots R_N^{(\text{T})}}
    \tensor{g}_{r_1 \cdots r_N} 
    \tensor{a}_{i_1 r_1}^{(\text{T}_1)}
    \cdots
    \tensor{a}_{i_N r_N}^{(\text{T}_N)}
    \label{appendix-tucker-scalar-form}
\end{equation}
where $\tensor{g} \in \R^{R_1^{(\text{T})} \times \cdots \times R_N^{(\text{T})}}$ and $\tensor{a}^{(\text{T}_k)} \in \R^{w_k \times R_k^{(\text{T})}}$. The SeKron decomposition of tensor $\tensor{w}$, with $S=N+1, \; R_n=R_{n}^{(T)}$ and $ a_n^{(n)} = w_n $ for $n=1, \dots, N$ yields
\begin{equation}
    \tensor{w}_{i_1 \cdots i_N} = 
    \sum_{r_1, \dots, r_{N}=1}^{R_1^{(\text{T})}, \cdots, R_N^{(\text{T})}} 
    \tensor{a}_{r_1 i_1 1 \cdots 1}^{(1)} \cdots
    \tensor{a}_{r_1 \cdots r_{N} 1 \cdots 1 i_N}^{(N)}
    \tensor{a}_{r_1 \cdots r_{N} 1 \cdots 1}^{(N+1)},
\end{equation}
which is equivalent to \eqref{appendix-tucker-scalar-form} in the special case where there are nullity constraints on some elements in the Kronecker factors, such that for $k=2,\dots,N$  
\begin{equation}
    \tensor{a}^{(k)}_{r_1 \cdots r_k 1 \cdots 1 i_k 1 \cdots 1} = 0 \quad \text{when} \quad r_j \in \{ x\in\mathbb{N} \;|\; x \leq R_j^{(\text{T})}, \; x \neq R_j^{(\text{T}^*)} \} \quad j=1,\dots,k-1
\end{equation}
for any choice of $R_j^{(\text{T}*)} \in \{ x\in\mathbb{N} \;|\; x \leq R_j^{(\text{T})}\}$.
The Tensor Ring (TR) decomposition of $\tensor{w}$ is given by
\begin{equation}
    \tensor{w}_{i_1 \cdots i_N} = 
    \sum_{r_1=1, \dots, r_N}^{R_1^{(\text{TR})}, \dots R_N^{(\text{TR})}}
    \tensor{a}_{i_1 r_1 r_2}^{(\text{TR}_1)}
    \cdots
    \tensor{a}_{i_N r_N r_{N+1}}^{(\text{TR}_N)}
    \label{appendix-tr-scalar-form}
\end{equation}
where $\tensor{a}^{(\text{TR}_k)} \in \R^{w_k \times R_k^{(\text{TR})} \times R_{k+1}^{(\text{TR})}}$, and $R_{1}^{(\text{TR})} = R_{N+1}^{(\text{TR})}$. As the Tensor Train decomposition can be viewed as a special case of the Tensor Ring decomposition (with $R_{1}^{(\text{TR})} = R_{N+1}^{(\text{TR})} = 1$), it suffices to show that SeKron generalizes Tensor Ring. The SeKron decomposition of tensor $\tensor{w}$, with $S=N+1$; $R_k=R_{k}^{(\text{TR})}$ for $k=1, \dots, N-1$ and $ a_n^{(n+1)}=w_n $ for $n=1, \dots, N$ leads to
\begin{equation}
    \tensor{w}_{i_1 \cdots i_N} = 
    \sum_{r_1, \dots, r_{N}=1}^{R_1^{(\text{TR)}}, \dots, R_N^{(\text{TR})}} 
    \tensor{a}_{r_1 1 \cdots 1}^{(1)} 
    \tensor{a}_{r_1 r_2 i_1 1 \cdots 1}^{(2)}
    \cdots
    \tensor{a}_{r_1 \cdots r_{N+1} 1 \cdots 1 i_N}^{(N+1)},
\end{equation}
which is equivalent to \eqref{appendix-tr-scalar-form} in the special case where some elements in the Kronecker factors are constrained, such that all elements in tensor $\tensor{a}^{(1)}$ are constrained to one and 
\begin{equation}
\tensor{a}^{(k)}_{r_1 \cdots r_k 1 \cdots 1 i_k 1 \cdots 1} = 0 \quad  \forall r_j \in \{x\in \mathbb{N}  \;|\;  x \leq R_j^{(\text{TR})}, \; x \neq R_j^{(\text{TR}^*)} \}
\end{equation}
for
\begin{equation}
    j = \begin{cases}
        1, \dots, k-2 & k=2, \dots, N \\
        2,\dots,k-1 & k=N+1
    \end{cases}
\end{equation}
for any choice of $R_j^{(\text{TR}^*)} \in \{ x\in\mathbb{N} \;|\; x \leq R_j^{(\text{TR})}\}$. 
\end{proof}

\ksdconv*
\begin{proof}
First we bring out the summations in the SeKron representaion of $\tensor{w}$ 
\begin{equation}
    \tensor{w} =  \sum_{r_{1}}^{R_{1}} \tensor{a}_{r_{1}}^{(1)} \otimes
    \sum_{r_{2}}^{R_{2}} \tensor{a}_{r_{1} r_{2}}^{(2)} \otimes\cdots\otimes
    \sum_{r_{S-1}}^{R_{S-1}} \tensor{a}_{r_{1} \cdots r_{S-1} }^{(S-1)}
    \otimes
    \tensor{a}_{r_{1} \cdots r_{S-1} }^{(S)},
    \tag{\ref{eq:ksd-full-rank-tensor-form} revisted}
\end{equation}
such that
\begin{equation}
    \tensor{w} = 
    \sum_{r_1, \dots, r_S = 1}^{R_1, \cdots, R_{S-1}} \tensor{a}_{r_1}^{(1)} 
    \otimes
    \dots
    \otimes 
    \tensor{a}_{r_1 r_2 \cdots r_{S-1}}^{(S)}.
    \label{eq:appendix-ksd-summations-outside-tensor-form}
\end{equation}
Then, using the scalar form definition of sequences of kronecker products in \eqref{eq:kron-tensor-sequences-indexing}
\begin{equation}
j_n^{(k)} =
    \begin{cases}
        i_n - \sum_{t=1}^{k-2} j_{n}^{(t)} \prod_{l=t+1}^{S} a_n^{(l)} \text{mod} \, a_n^{(S)} \quad & k=S, \\
    \left\lfloor \frac
    { i_n - \sum_{t=1}^{k-1} j_{n}^{(t)} \prod_{l=t+1}^{S} a_n^{(l)} }
    {\prod_{l=k+1}^{S} a_n^{(l)}}\right\rfloor \quad 
    & \text{otherwise},
    \end{cases}
    \tag{\ref{eq:kron-tensor-sequences-indexing} revisited}
\end{equation}
allows us to re-write \eqref{eq:appendix-ksd-summations-outside-tensor-form} in scalar form as
\begin{equation}
    \tensor{w}_{i_1 \cdots i_N} = 
    \sum_{r_1 \cdots r_S = 1}^{R_1} 
    \tensor{a}_{r_1 j_1^{(1)} \cdots j_N^{(1)}}^{(1)} \cdots
    \tensor{a}_{r_1 \cdots r_{S-1} j_1^{(S)} \cdots j_N^{(S)}}^{(S)}
    \label{eq:appendix-ksd-summations-outside-scalar-form}
\end{equation}
As the $j_n^{(k)}$ terms decompose $i_n$ into an integer weighted sum, we can recover $i_n$ using 
\begin{equation}
    i_n = f(\vec{j}_n) \triangleq \sum_{k=1}^{S} j_n^{(k)} \prod_{l=k+1}^{S} a_n^{(l)},
\end{equation}
where $\vec{j}_n = (j_n^{(1)}, \dots, j_n^{(S)})$. Thus, we can write 
\begin{equation}
    \tensor{x}_{i_1 + z_1, \cdots i_N + z_N} 
    = 
    \tensor{x}_{f(\vec{j}_1) + z_1, \cdots f(\vec{j}_N) + z_N}.
    \label{eq:re-indexing-tensor-x}
\end{equation}
Finally, combining equations \eqref{eq:appendix-ksd-summations-outside-scalar-form} and \eqref{eq:re-indexing-tensor-x} leads to \eqref{eq:linear-mappings-with-kronecker-sequences-scalar-form}.
\end{proof}

\begin{theorem} (Universal approximation via shallow SeKron networks) Any shallow SeKron factorized neural network $\hat{f}^{(s)}$ with an $L$-Lipschitz activation function $a$, is dense in the class of continuous functions $C(X)$ for any compact subset $X$ of $\R^d$
\end{theorem}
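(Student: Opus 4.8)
The plan is to reduce the statement to the classical universal approximation theorem for shallow networks and then transfer density to the SeKron-parametrized family using the exact-reconstruction guarantee of Theorem~\ref{thm:ksdecomposition}. I would begin by fixing a target $g \in C(X)$ and a tolerance $\epsilon > 0$, and invoking the classical result that the class of ordinary shallow networks $f(x) = \sum_{j=1}^m v_j\, a(\langle \vec{w}_j, x\rangle + b_j)$ is dense in $C(X)$ for the given activation $a$; this supplies a standard network $f$ with $\sup_{x \in X}|f(x) - g(x)| < \epsilon/2$. The compactness of $X$ enters here to make $C(X)$ a space where uniform approximation is the right notion.

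Next I would express the hidden weight matrix $\mat{W} = [\vec{w}_1; \cdots; \vec{w}_m]$ of $f$ in SeKron form. By Theorem~\ref{thm:ksdecomposition}, choosing full ranks $\widehat{R}_i = R_i$ yields a SeKron reconstruction equal to $\mat{W}$ exactly, so $f$ is \emph{itself} a shallow SeKron network $\hat{f}^{(s)}$; this already gives the inclusion of standard shallow networks into the SeKron family and hence density. To expose the role of the Lipschitz hypothesis, and to cover the practically relevant finite-rank regime, I would instead take a low-rank SeKron approximation $\widehat{\mat{W}}$ of $\mat{W}$ and control the induced error in function space through a perturbation estimate.

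The perturbation estimate is the one genuinely new computation. For any $x \in X$, using that $a$ is $L$-Lipschitz,
\begin{equation}
|\hat{f}^{(s)}(x) - f(x)| \leq L \sum_{j=1}^m |v_j|\, |\langle \widehat{\vec{w}}_j - \vec{w}_j,\, x\rangle| \leq L\, M\, \|\vec{v}\|_1\, \|\widehat{\mat{W}} - \mat{W}\|_{\mathrm{F}},
\end{equation}
where $M = \sup_{x \in X}\|x\|$ is finite by compactness and the last step uses $\max_j \|\widehat{\vec{w}}_j - \vec{w}_j\| \leq \|\widehat{\mat{W}} - \mat{W}\|_{\mathrm{F}}$. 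The reconstruction error $\|\widehat{\mat{W}} - \mat{W}\|_{\mathrm{F}}^2$ is precisely the quantity $\varepsilon_{\text{SeKron}}$ bounded by the tail-of-singular-values expression in \eqref{eq:appendix-reconstruction-error-unrolled}, which can be driven below $\bigl(\epsilon / (2 L M \|\vec{v}\|_1)\bigr)^2$ by raising the ranks $\widehat{R}_i$. The triangle inequality against $g$ then gives $\sup_{x \in X}|\hat{f}^{(s)}(x) - g(x)| < \epsilon$, completing the density argument.

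The main obstacle is not the perturbation algebra but correctly importing the classical universal approximation input: the $L$-Lipschitz condition alone does not force density, since a linear activation is $1$-Lipschitz yet spans only affine maps. The argument must therefore restrict to non-polynomial Lipschitz activations (or otherwise cite the exact hypotheses under which shallow networks with $a$ are dense). Once that classical ingredient is fixed, the SeKron layer contributes no further obstruction, because Theorem~\ref{thm:ksdecomposition} makes the reparametrization exact in the full-rank limit and, via the bound above, Lipschitz-stable in the finite-rank regime.
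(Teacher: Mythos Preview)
Your approach is essentially the same as the paper's: reduce to the classical universal approximation theorem for ordinary shallow networks, then transfer density to the SeKron family via a Lipschitz perturbation bound in which the network-level error is controlled by the SeKron reconstruction error $\varepsilon_{\text{SeKron}}$ from \eqref{eq:appendix-reconstruction-error-unrolled}. The paper carries out the same two-step argument in the $L^2(\mu)$ norm rather than the sup norm you use, writing $\|f-\hat f^{(s)}\|_2^2$ as $\|f-\hat f\|_2^2 + \|\hat f - \hat f^{(s)}\|_2^2$ plus a cross term, citing Hornik for the first piece, and bounding the second by $L\|\vec w\|_2^2\|X\|_2^2\,\varepsilon_{\text{SeKron}}$. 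Your observation that full-rank SeKron already gives exact reconstruction (so density is immediate without any perturbation estimate) is a clean shortcut the paper does not make explicit, and your caveat that an $L$-Lipschitz hypothesis alone does not guarantee the classical density step is well taken and applies equally to the paper's invocation of Hornik.
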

\begin{proof}
Let $\hat{f}$ denote a shallow neural network, and $f \in C(X)$. Then,
\begin{align}
    \left\|f - \hat{f}^{(s)}\right\|_2^2 
    &\triangleq 
    \int_X \Bigg(f(x) - \hat{f}^{(s)}(x) \Bigg)^2 d\mu \\
    \label{eq:appendix-shallow-norm-full-rank}
    &=
    \int_X \Bigg(f(x) - \hat{f}(x) \Bigg)^2 d\mu \\
    \label{eq:appendix-shallow-norm-low-rank}
    &+ 
    \int_X \Bigg(\hat{f}(x) - \hat{f}^{(s)}(x) \Bigg)^2 d\mu \\
    &+
    2\int_X \Bigg(f(x) - \hat{f}(x) \Bigg) \Bigg(\hat{f}(x) - \hat{f}^{(s)}(x) \Bigg) d\mu 
\end{align}
According to Hornik (1991), \eqref{eq:appendix-shallow-norm-full-rank} is dense in $C(X)$; therefore, it suffices to show that \eqref{eq:appendix-shallow-norm-low-rank} is bounded as well.
\begin{align}
    \int_X \Bigg(\hat{f}(x) - \hat{f}^{(s)}(x) \Bigg)^2 d\mu 
    &=
    \int_X \Bigg( \vec{w}^{\top}\vec{a}(\mat{w}\vec{x}) - \vec{w}^{\top}\vec{a}(\mat{w}^{(s)}\vec{x}) \Bigg)^2 d\mu \\
    &\leq
    L\big\|\vec{w}\big\|_2^{2}\big \|X\big\|_2^{2} \, \varepsilon_{\text{SeKron}}(\mat{W}, \vec{r}, \mat{D}) 
\end{align}
where $\varepsilon$ denotes the $\mat{D} \vec{r}$-SeKron approximation error as in \eqref{eq:appendix-reconstruction-error-unrolled}, with matrix $\mat{D}$ and vector $\vec{r}$ describing the shapes of the Kronecker factors the ranks used in the SeKron decomposition of $\mat{w}$, respectively.
\end{proof}

\end{document}